%% file: main.tex
\documentclass{article}
\usepackage{comment}
\input{preamble}
\input{theoremstyles}

\input{macros}

\title{Hierarchical Domain Generalization}

\author{%
  Chenxiao Yang$^{1}$, Zhiyuan Li$^{1}$, Shai Ben-David$^{2}$, Nathan Srebro$^{1}$\\
  $^{1}$Toyota Technological Institute at Chicago\\
  $^{2}$University of Waterloo\\
  \texttt{\{chenxiao,zhiyuanli,nati\}@ttic.edu, shai@uwaterloo.ca}
}

\begin{document}

\maketitle

\begin{abstract}
We study hierarchical domain generalization as a problem of extrapolation from finite observed regions to an entire instance space, replacing i.i.d. sampling with arbitrary domain hierarchies. We show that the central obstruction is not only the complexity of the hypothesis class, but the train/test domain partition through which evidence is revealed. In particular, no matter how small the class or how large the training size, some partition makes generalization fail for some target. These results suggest that modern generalization theory must treat domain structure as a first-class object.
\end{abstract}

\section{Introduction}
\label{sec:introduction}

The concept of generalization in machine learning has been shaped by the seminal work of Vapnik and Chervonenkis on statistical generalization theory~\citep{vapnik1971uniform} and by Valiant's introduction of the PAC model for binary classification~\citep{valiant1984theory}.

In these setups, training data is drawn i.i.d.\ from some unknown data distribution $\cD$, and the learner's output is evaluated on the \emph{same} $\cD$. A successful learner is required to achieve low expected error on the distribution that generated the training data (or related data distributions in more recent work on domain adaptation learning, e.g., \cite{Ben-DavidBCP06}). This statistical framework has shaped the field's conception of ``what it means to generalize'' for four decades.

By contrast, the large-language-model era asks for a kind of generalization that this classical framework does not capture: trained on a very large fixed domain of inputs (e.g., the web), the learner should reason --- recover the underlying rule --- and produce correct outputs on relevant inputs outside its training domain.

However, a model trained on five-digit addition may fit its training data perfectly yet fail on six-digit numbers, as it often does in practice~\citep{anil2022exploring,lee2023teaching}. In that case, it has matched a pattern in the data, not the rule that adds arbitrarily long numbers. It fails to \emph{reason}. What the classical theory calls generalization is, in the sense most scientists and philosophers use the word, interpolation (biased or regulated by some prior knowledge). This is increasingly insufficient to capture many learning tasks of practical interest. Reasoning demands extrapolation, or out-of-domain generalization: from one kind of input to a different kind that shares some controlled (deterministic) similarity with the training data. 

We formalize this form of generalization as exact identification along a domain hierarchy. Instead of measuring expected error under a data-generating distribution, we require the learner to recover the target function on the entire instance space $\Omega$. Instead of assuming an i.i.d.\ observation model, we let the observable region grow through an externally specified hierarchy $\bar{\cX}=(\cX_n)_{n\geq 0}$ of finite cumulative domains, with $\emptyset=\cX_0\subsetneq\cX_1\subsetneq\cX_2\subsetneq\cdots$ and $\bigcup_{n\geq1}\cX_n=\Omega$ (so $\Omega$ is countable). At level $n$, the learner observes the labeled training set $\{(x,f(x)):x\in\cX_n\}$ and is evaluated on all of $\Omega$; equivalently, it must extrapolate from the observed region $\cX_n$ to the unobserved complement $\Omega\setminus\cX_n$. The only way to achieve this is to exactly \emph{identify} $f$.

This captures a common desideratum of modern reasoning systems: train on shorter inputs and generalize to longer ones~\citep{anil2022exploring,lee2023teaching,zhou2023algorithms} (a.k.a. length generalization), train on simpler instances and generalize to harder ones~\citep{bengio2009curriculum}, and train on bounded-horizon tasks and generalize to tasks with longer horizons~\citep{abbe2023generalization,yang2025recursive}. In the point-by-point case, where each increment is a singleton $\cX_n\setminus\cX_{n-1}=\{x_n\}$ and hence $\cX_n=\{x_1,\ldots,x_n\}$, the model recovers the presentation model of Gold's identification-in-the-limit paradigm~\citep{gold1967language}.

The central question is: given a hierarchy $\bar{\cX}$, how many levels $N$ must a learner observe to recover the target $f\in\cH$ on the entire domain $\Omega$? This question admits answers of varying strength: the required level may depend on both the target and the hierarchy, it may be common to all targets once the hierarchy is fixed, or it may be required to work uniformly over both targets and hierarchies.

The answer depends on how uniformly the level must be chosen. If the level may depend on both the target and the hierarchy, countability of $\cH$ is exactly the right condition. If the hierarchy is fixed but the same level must work for every target, the threshold becomes finiteness of $\cH$. But if the level must be chosen uniformly over the hierarchy itself, the situation changes completely: for every nontrivial hypothesis class, such a guarantee is possible only in the finite-domain case (\Cref{thm:A1-countable,thm:A2-finite,thm:A3-A4-characterization}).

Length generalization gives a concrete fixed-hierarchy instance of this question. The hierarchy is string length: training on all strings up to length $N$ and testing on longer strings asks whether that observed prefix determines the target rule. Since relevant classes such as finitely encoded Transformer families are infinite, we stratify them by complexity: for each bound $s$, we ask how many lengths are needed to identify every target of complexity at most $s$. Existing non-asymptotic length-generalization results estimate this sublevel domain complexity for particular representation classes and complexity scales~\citep{chen2025nonasymptotic,yang2026length}; qualitative and approximate variants fit the same fixed-hierarchy picture~\citep{huang2025formal,izzo2025quantitative}.

For a fixed hierarchy, such bounds have a clear interpretation: they say how far one must train along that hierarchy before the target is determined. But this still leaves open whether the bound reflects a property of the model class itself, or only of the particular hierarchy used to train and test. Our no-free-lunch result shows that the latter dependence is unavoidable. Even with the hypothesis class and complexity scale fixed, another hierarchy can make the finite-slice bounds grow arbitrarily fast (\Cref{thm:domain-generalization-nfl,cor:complexity-stratified-nfl}). Thus a length-generalization bound is a bound for the length hierarchy, not a hierarchy-free guarantee for arbitrary domain shifts.

Classical no-free-lunch theorems warn that an unrestricted hypothesis class admits no uniform learning guarantee~\citep{shalev2014understanding}. Here the obstruction comes from a second object that classical theory keeps offstage: the hierarchy that determines what is observed before what. A domain-generalization guarantee is therefore a property of the pair $(\cH, \bar{\cX})$, not of $\cH$ alone.

We further extend the framework and connect it to several neighboring learning settings, including online learning and generation in the limit~\citep{kleinberg2024languagegeneration}. Online learning counts mistakes rather than identifying levels, while generation in the limit asks for a weaker output than exact recovery. Furthermore, allowing error, or replacing hierarchies by distributional train/test sampling, does not by itself remove the difficulty: a guarantee still requires structural assumptions relating the observed region to the region on which the learner is evaluated. However, once we change the data model to positive-only presentations, uniform identification can hold for infinite classes and is controlled by overlap among positive regions.

\textbf{Organization.}~~
The rest of the paper is organized as follows. \Cref{sec:framework,sec:identification} introduce the hierarchy-based learning model and the main characterization theorem. \Cref{sec:complexity-stratified-domain-complexity,sec:length-gen} develop the fixed-hierarchy quantitative viewpoint, with length generalization as the central example. \Cref{sec:no-free-lunch-domain-generalization} shows why these bounds cannot be made independent of the hierarchy. \Cref{sec:extended-discussions} discusses extensions to related learning settings.

\input{sections/setup}
\input{sections/types}
\input{sections/characterization}
\input{sections/complexity_stratified}
\input{sections/length_generalization}
\input{sections/no_free_lunch}
\input{sections/extensions}
\section{Conclusion}
\label{sec:conclusion}

We characterized hierarchical domain generalization at four levels of uniformity, and proved a no-free-lunch theorem showing that hierarchy-uniform guarantees are achievable only in the trivial finite-domain case. For infinite classes, meaningful guarantees thus live at the level of fixed hierarchies and bounded-complexity slices, as in length generalization. The hierarchy is part of the learning problem.

\bibliography{iclr2026_conference}
\bibliographystyle{plainnat}

\clearpage
\appendix

\input{appendix/app_a_notation}
\input{appendix/app_b_characterization_proofs}
\input{appendix/app_c_complexity_stratified_proofs}
\input{appendix/app_d_no_free_lunch_proofs}
\input{appendix/app_e_online_learning}
\input{appendix/app_f_approximation}
\input{appendix/app_f_out_of_support}
\input{appendix/app_h_positive_only}
\input{appendix/app_g_limitations}

\clearpage

\end{document}

%% file: preamble.tex
\PassOptionsToPackage{table,dvipsnames}{xcolor}
\usepackage{xcolor}

\usepackage[preprint]{Styles/neurips_2026}

\usepackage{microtype}
\usepackage{graphicx}
\usepackage{array}
\usepackage{booktabs}
\usepackage{longtable}
\usepackage{tikz}
\usetikzlibrary{arrows.meta,positioning}
\usepackage{hyperref}

\input{math_commands.tex}

\usepackage{amsfonts}
\usepackage{amsmath,amsthm,amssymb,dsfont}
\usepackage{enumitem}
\usepackage{cleveref}
\usepackage{float}
\usepackage{makecell}
\usepackage[most]{tcolorbox}

\newtcolorbox{takeawaybox}{
  enhanced,
  breakable,
  colback=gray!3,
  colframe=black!55,
  boxrule=0.45pt,
  arc=1.2mm,
  left=7pt,
  right=7pt,
  top=6pt,
  bottom=6pt,
  before skip=0.8em,
  after skip=0.3em,
  fonttitle=\bfseries,
  coltitle=black,
  title=Takeaway,
  attach boxed title to top left={xshift=8pt,yshift=-2.1mm},
  boxed title style={
    colback=white,
    colframe=white,
    boxrule=0pt,
    left=2pt,
    right=2pt,
    top=0pt,
    bottom=0pt
  }
}

%% file: math_commands.tex
\usepackage{amsmath,amsfonts,bm}

\def\eqref#1{equation~\ref{#1}}

\def\1{\bm{1}}

\DeclareMathAlphabet{\mathsfit}{\encodingdefault}{\sfdefault}{m}{sl}
\SetMathAlphabet{\mathsfit}{bold}{\encodingdefault}{\sfdefault}{bx}{n}



%% file: theoremstyles.tex
\usepackage{aliascnt}

\newtheorem{theorem}{Theorem}

\newaliascnt{proposition}{theorem}
\newtheorem{proposition}[proposition]{Proposition}
\aliascntresetthe{proposition}
\crefname{proposition}{Proposition}{Propositions}
\Crefname{proposition}{Proposition}{Propositions}

\newaliascnt{lemma}{theorem}

\newtheorem*{lemma*}{Lemma}
\aliascntresetthe{lemma}
\crefname{lemma}{Lemma}{Lemmas}
\Crefname{lemma}{Lemma}{Lemmas}

\newaliascnt{corollary}{theorem}

\newtheorem{corollary}[corollary]{Corollary}
\aliascntresetthe{corollary}
\crefname{corollary}{Corollary}{Corollaries}
\Crefname{corollary}{Corollary}{Corollaries}

\newtheorem{definition}{Definition}
\newtheorem{example}{Example}

%% file: macros.tex
\newcommand{\Aone}{\textbf{A1}}    
\newcommand{\Atwo}{\textbf{A2}}    
\newcommand{\Athree}{\textbf{A3}}  
\newcommand{\Afour}{\textbf{A4}}   

\DeclareMathOperator{\Eq}{Eq}                

\DeclareMathOperator{\Ldim}{Ldim}

\newcommand{\cH}{\mathcal{H}}
\newcommand{\cD}{\mathcal{D}}
\newcommand{\cX}{\mathcal{X}}


%% file: sections/setup.tex
\section{Hierarchical Domain Generalization}
\label{sec:framework}

\subsection{Setup}
\label{sec:framework-setup}

Let $\Omega$ be a countably infinite instance space and let $\cH \subseteq \{0,1\}^{\Omega}$ be a binary hypothesis class.\footnote{Finite domains are used only as degenerate comparison cases, with the convention that the hierarchy collapses to $\cX_n=\Omega$ for all large enough $n$. The binary range is for concreteness; all characterizations extend to any finite label space.} We work in the realizable setting with deterministic labeling: an unknown target $f \in \cH$ generates labels $f(x) \in \{0,1\}$ for $x \in \Omega$. For any subset $\cX \subseteq \Omega$, write $f|_\cX$ for the labeled training set $\{(x, f(x)) : x \in \cX\}$.

To model out-of-domain generalization, we must specify how the observable part of $\Omega$ grows. The primitive object is not a sampling distribution, but a predetermined sequence of finite cumulative regions whose labels become available to the learner.

\begin{definition}[Domain hierarchy]
\label{def:main-domain-hierarchy}
A \emph{hierarchy} on $\Omega$ is a telescopic exhaustion $\bar{\cX}=(\cX_n)_{n\geq 0}$ by finite cumulative observed regions satisfying
\begin{equation}
  \emptyset=\cX_0 \;\subsetneq\; \cX_1 \;\subsetneq\; \cX_2 \;\subsetneq\; \cdots,
  \qquad
  |\cX_n|<\infty \text{ for every } n,
  \qquad
  \bigcup\nolimits_{n \geq 1}\cX_n = \Omega.
\end{equation}
\end{definition}
For $n\geq 1$, write $\Delta_n:=\cX_n\setminus\cX_{n-1}$ for the newly revealed increment at level $n$.
The definition is intentionally agnostic about how the regions are ordered. One common source is a sequence of externally specified domains, such as different populations, demographic subgroups, or intervention-based environments. Another source orders $\Omega$ by an intrinsic property of the inputs, such as length or difficulty, placing simpler inputs in early levels and harder ones later, as in length generalization and curriculum learning. When each increment is a singleton, say $\Delta_n=\{x_n\}$, equivalently $\cX_n=\{x_1,\ldots,x_n\}$, the hierarchy recovers Gold's identification-in-the-limit setting~\citep{gold1967language}.

\paragraph{Hierarchical Domain Generalization.}
Once a hierarchy is fixed, level $n$ splits $\Omega$ into the observed region $\cX_n$, from which the learner receives the labeled training set $f|_{\cX_n}$, and the unobserved complement $\Omega \setminus \cX_n$, which is out of domain relative to that level. The learner is evaluated on the entire $\Omega$, so success requires extrapolation from the observed region to the unseen part of the hierarchy. This in-domain/out-of-domain split is determined by $\bar{\cX}$ alone, without any distributional assumptions.

We idealize each observed level as fully labeled: at level $n$, the learner has access to all labels on $\cX_n$. Thus the question is not whether the learner can fit the observed region, but whether that region already determines the target beyond it.

\begin{definition}[Learner]
\label{def:main-learner}
A learning algorithm is a map $A: \bigcup_{\cX \subseteq \Omega,\ |\cX|<\infty}\{0,1\}^{\cX} \to \{0,1\}^{\Omega}$, which takes the labeled training set $f|_\cX$ as input and returns a binary predictor $A(f|_\cX)$. The learner may be chosen with knowledge of the hypothesis class $\cH$.
\end{definition}
Because the training set is indexed by a set rather than an ordered sequence, $A$ does not see the order in which sample points are revealed --- a difference from online learning, where the within-level order can itself carry information.

\begin{definition}[Domain complexity]
\label{def:main-domain-complexity}
Given a learner $A$, a target $f\in\cH$, and a hierarchy $\bar{\cX}$, the \emph{domain complexity} of $A$ on $f$ along $\bar{\cX}$ is
\begin{equation}
  N_A^{\bar{\cX}}(f)
  :=
  \min\big\{
    i\in\mathbb{N}_{\geq 1}:
    \forall n\geq i,\; A(f|_{\cX_n})=f
  \big\},
\end{equation}
with value $\infty$ if no such finite $i$ exists.
\end{definition}
Thus $N_A^{\bar{\cX}}(f)$ is the first level after which, for every $n\geq N_A^{\bar{\cX}}(f)$, training on the observed region $\cX_n$ yields a hypothesis correct on all of $\Omega$, equivalently on every later and currently unobserved part of the hierarchy.

%% file: sections/types.tex
\subsection{Levels of Learning Goals}
\label{sec:learning-goals}

We distinguish the possible forms of hierarchical domain generalization by the order of the quantifiers over the learner, the target, the hierarchy, and the identifying level. The first two criteria are hierarchy-dependent: in \Aone, the identifying level may depend on both the target and the hierarchy; in \Atwo, it may still depend on the hierarchy, but must be common to all targets in the class.

\begin{definition}[\Aone: \emph{Non-uniform identifiability}]
\label{def:A1}
$\cH$ is \emph{non-uniformly identifiable} (\Aone) if there exists a learning algorithm $A$ with the following property: for every target $f \in \cH$ and every hierarchy $\bar{\cX}$, there exists a finite $N$ such that $N_A^{\bar{\cX}}(f) \leq N$.
\end{definition}

\Atwo\ keeps the dependence on the hierarchy but requires one level to work for all targets.

\begin{definition}[\Atwo: \emph{Target-uniform identifiability}]
\label{def:A2}
$\cH$ is \emph{target-uniformly identifiable} (\Atwo) if there exists a learning algorithm $A$ with the following property: for every hierarchy $\bar{\cX}$, there is a finite $N$ such that, for every target $f \in \cH$, $N_A^{\bar{\cX}}(f) \leq N$.
\end{definition}

The remaining criteria are domain-uniform. Unlike \Aone\ and \Atwo, they require the level bound to be chosen without knowing the hierarchy, and hence without knowing the induced train/test split. This is the hierarchy analogue of choosing a PAC sample-size bound before the data-generating distribution is revealed.

\begin{definition}[\Athree: \emph{Domain-uniform identifiability}]
\label{def:A3}
$\cH$ is \emph{domain-uniformly identifiable} (\Athree) if there exists a learning algorithm $A$ with the following property: for every target $f\in\cH$, there exists a finite $N$ such that, for every hierarchy $\bar{\cX}$, $N_A^{\bar{\cX}}(f)\leq N$.
\end{definition}

\Athree\ still permits target-dependent bounds. \Afour\ removes this last dependence: it requires a single finite level that works for all targets and all hierarchies. In this sense, \Afour\ is the fully uniform analogue of a sample-complexity guarantee.

\begin{definition}[\Afour: \emph{Uniform identifiability}]
\label{def:A4}
$\cH$ is \emph{uniformly identifiable} (\Afour) if there exist a learning algorithm $A$ and a finite $N$ with the following property: for every target $f \in \cH$ and every hierarchy $\bar{\cX}$, $N_A^{\bar{\cX}}(f) \leq N$.
\end{definition}

%% file: sections/characterization.tex
\section{Exact Characterization of Domain Generalization}
\label{sec:characterization}
\label{sec:identification}

We now characterize the four learning goals from \Cref{sec:learning-goals}.

\begin{figure}[t]
\centering
\begin{tikzpicture}[
  font=\scriptsize,
  box/.style={draw, rounded corners=2pt, align=center, inner xsep=2.2pt, inner ysep=2.2pt, text width=3.55cm},
  aonebox/.style={box, fill=red!4},
  atwobox/.style={box, fill=green!5},
  athreebox/.style={box, fill=SkyBlue!14},
  condition/.style={box, fill=gray!8},
  formalarr/.style={-{Latex[length=1.7mm]}, thick, dashed, draw=gray!65},
  proofeq/.style={<->, >=Latex, thick, draw=black},
  proofback/.style={<->, >=Latex, thick, draw=black},
  refbadge/.style={font=\tiny, fill=white, inner sep=0.6pt, text=black},
  arrref/.style={font=\tiny, fill=white, inner sep=0.7pt, text=black}
]
\node[aonebox] (a1) {
  \textbf{\Aone: non-uniform}\\[-0.15ex]
  \resizebox{3.32cm}{!}{$\exists A\,\forall f\,\forall\bar{\cX}\,\exists N:\;N_A^{\bar{\cX}}(f)\leq N$}
};
\node[atwobox, right=0.72cm of a1] (a2) {
  \textbf{\Atwo: target-uniform}\\[-0.15ex]
  \resizebox{3.32cm}{!}{$\exists A\,\forall\bar{\cX}\,\exists N\,\forall f:\;N_A^{\bar{\cX}}(f)\leq N$}
};
\node[athreebox, right=0.72cm of a2] (a4) {
  \textbf{\Afour: uniform}\\[-0.15ex]
  \resizebox{3.32cm}{!}{$\exists A\,\exists N\,\forall f\,\forall\bar{\cX}:\;N_A^{\bar{\cX}}(f)\leq N$}
};

\node[aonebox, below=0.36cm of a1] (a1fixed) {
  \textbf{fixed hierarchy version}\\[-0.15ex]
  \resizebox{3.32cm}{!}{$\exists A\,\exists\bar{\cX}\,\forall f\,\exists N:\;N_A^{\bar{\cX}}(f)\leq N$}
};
\node[atwobox, below=0.36cm of a2] (a2fixed) {
  \textbf{fixed hierarchy common bound}\\[-0.15ex]
  \resizebox{3.32cm}{!}{$\exists A\,\exists\bar{\cX}\,\exists N\,\forall f:\;N_A^{\bar{\cX}}(f)\leq N$}
};
\node[athreebox, below=0.36cm of a4] (a3) {
  \textbf{\Athree: domain-uniform}\\[-0.15ex]
  \resizebox{3.32cm}{!}{$\exists A\,\forall f\,\exists N\,\forall\bar{\cX}:\;N_A^{\bar{\cX}}(f)\leq N$}
};

\node[aonebox, below=0.36cm of a1fixed] (a1target) {
  \textbf{target-chosen hierarchy version}\\[-0.15ex]
  \resizebox{3.32cm}{!}{$\exists A\,\forall f\,\exists\bar{\cX}\,\exists N:\;N_A^{\bar{\cX}}(f)\leq N$}
};
\node[condition, below=0.58cm of a1target] (countable) {
  \textbf{structural condition}\\[-0.15ex]
  $\cH$ countable
};
\node[condition] (finite) at (a2 |- countable) {
  \textbf{structural condition}\\[-0.15ex]
  $\cH$ finite
};
\node[condition] (omega) at (a4 |- countable) {
  \textbf{trivial case}\\[-0.15ex]
  $|\Omega|<\infty$
};

\node[refbadge, anchor=north east, yshift=-1pt] at (a1.south east) {\Cref{def:A1}};
\node[refbadge, anchor=north east, yshift=-1pt] at (a2.south east) {\Cref{def:A2}};
\node[refbadge, anchor=north east, yshift=-1pt] at (a3.south east) {\Cref{def:A3}};
\node[refbadge, anchor=north east, yshift=-1pt] at (a4.south east) {\Cref{def:A4}};

\draw[formalarr] (a1) -- (a1fixed);
\draw[formalarr] (a1fixed) -- (a1target);
\draw[formalarr] (a2) -- (a2fixed);
\draw[formalarr] (a4) -- (a3);

\draw[proofeq] (a1target) -- (countable);
\draw[proofeq] (a2fixed) -- (finite);
\draw[proofeq] (a3) -- (omega);
\node[arrref, anchor=west] at ([xshift=1pt,yshift=0.28cm]countable.north) {\Cref{thm:A1-equivalent-conditions}};
\node[arrref, anchor=west] at ([xshift=1pt,yshift=0.28cm]finite.north) {\Cref{thm:target-uniform-equivalences}};
\node[arrref, anchor=east] at ([xshift=-1pt,yshift=0.28cm]omega.north) {\Cref{thm:A3-A4-characterization}};
\draw[proofback, rounded corners=2pt] ([xshift=-4pt]countable.west) -- ++(-0.38,0) |- ([xshift=-4pt]a1.west);
\draw[proofback, rounded corners=2pt]
  (finite.south)
  -- ([yshift=-0.32cm]finite.south)
  -- ([xshift=2.35cm,yshift=-0.32cm]finite.south)
  -- ([xshift=2.35cm,yshift=0.42cm]a2.north)
  -- ([yshift=0.42cm]a2.north)
  -- (a2.north);
\draw[proofback, rounded corners=2pt] ([xshift=4pt]omega.east) -- ++(0.36,0) |- ([xshift=4pt]a4.east);
\node[arrref, overlay] at ([xshift=-0.60cm]a1target.west) {\Cref{thm:A1-countable}};
\node[arrref, overlay] at ([xshift=2.35cm]finite |- a1target) {\Cref{thm:A2-finite}};
\node[arrref, overlay] at ([xshift=0.50cm]a4.east |- a1target) {\Cref{thm:A3-A4-characterization}};

\draw[formalarr] (a4.west) -- (a2.east);
\draw[formalarr] (a2.west) -- (a1.east);
\draw[formalarr] (a2fixed.west) -- (a1fixed.east);
\draw[formalarr, rounded corners=2pt] (a3.east) -- ++(0.35,0) -- ++(0,1.65) -| (a1.north);
\draw[formalarr] (omega.west) -- (finite.east);
\draw[formalarr] (finite.west) -- (countable.east);
\end{tikzpicture}
\caption{\textbf{Key characterization.} Exact characterization of when a hypothesis class allows hierarchical domain generalization at levels that may depend on the target and hierarchy (\Aone), depend on the hierarchy but are uniform over targets (\Atwo), depend on the target but are uniform over hierarchies (\Athree), or are uniform over both targets and hierarchies (\Afour). Throughout the figure, $f\in\cH$, $\bar{\cX}$ ranges over hierarchies, and $|\cH|>2$. Gray dashed arrows are formal implications, black double arrows mark substantive equivalences, and transitive arrows are omitted.}
\label{fig:identifiability-landscape}
\end{figure}

\subsection{Non-Uniform Domain-Dependent Generalization}
\label{sec:A1-characterization}

Very little, it turns out, is required for the weakest criterion. If we settle for \Aone, $\cH$ being countable is all we need.

\begin{theorem}[\Aone\ $\Leftrightarrow$ Countable $\cH$]
\label{thm:A1-countable}
$\cH$ is non-uniformly identifiable (\Aone, \Cref{def:A1}) if and only if $\cH$ is countable.
\end{theorem}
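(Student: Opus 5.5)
For the direction ``countable $\Rightarrow$ \Aone'' I would use Gold-style identification by enumeration. Fix an enumeration $h_1,h_2,\ldots$ of $\cH$; a finite or empty class is handled by padding or trivially. On input $f|_{\cX}$ for a finite $\cX\subseteq\Omega$, the learner $A$ returns $h_j$, where $j$ is the least index with $h_j|_{\cX}=f|_{\cX}$. If no such index exists, it returns an arbitrary fixed predictor.

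Now fix a target $f\in\cH$ and a hierarchy $\bar{\cX}$, and let $k$ be the least index with $h_k=f$. Every $h_j$ with $j<k$ differs from $f$, so it disagrees with $f$ at some point $x_j\in\Omega$. Since $\bigcup_n\cX_n=\Omega$ and the $\cX_n$ are nested, each $x_j$ lies in $\cX_{n_j}$ for some finite $n_j$. Set $N:=\max(1,\max_{j<k}n_j)$. For every $n\geq N$, each $h_j$ with $j<k$ is inconsistent with $f|_{\cX_n}$, while $h_k$ is consistent. Hence $A(f|_{\cX_n})=h_k=f$, which gives $N_A^{\bar{\cX}}(f)\leq N<\infty$. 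The finiteness of $N$ uses only that $k$ is finite, which is exactly where countability enters.

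For the converse, ``\Aone\ $\Rightarrow$ countable'', I would use a counting argument. Suppose $A$ witnesses \Aone. Since $\Omega$ is countable, fix any single hierarchy $\bar{\cX}$, for instance $\cX_n=\{\omega_1,\ldots,\omega_n\}$ for an enumeration of $\Omega$. By \Aone, every $f\in\cH$ has some finite $n$ with $A(f|_{\cX_n})=f$. So the map
\[
\bigcup_{n\geq 1}\{0,1\}^{\cX_n}\to\{0,1\}^{\Omega},\qquad g\mapsto A(g),
\]
has image containing $\cH$. The domain is a countable union of finite sets, hence countable, and so $\cH$ is countable. This direction in fact uses only the fixed-hierarchy version of \Aone.

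I do not expect a serious obstacle. The one step needing care is the forward direction, and specifically the requirement that $A(f|_{\cX_n})=f$ hold for \emph{all} $n\geq N$, not merely eventually along some subsequence. This holds because the cumulative regions are nested: once a disagreement point $x_j$ has been revealed, it stays revealed, so each earlier hypothesis remains eliminated.
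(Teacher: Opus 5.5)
Your proof is correct and follows the paper's argument. The forward direction uses the least-index consistent (identification-by-enumeration) learner, and the converse is a counting argument along one fixed point-by-point hierarchy; your surjection from the countable set of labeled prefixes onto a superset of $\cH$ is the same counting as the paper's injection $f\mapsto (N_f, f|_{\cX_{N_f}})$, and taking $k$ as the least index with $h_k=f$ handles repeated enumerations, which the paper leaves implicit.
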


\begin{proof}[Proof sketch]
Enumerate $\cH$ and let $A$ be the consistent learner with this fixed tie-breaking order: after observing a prefix, it outputs the first hypothesis in the enumeration that is consistent with the observed labels. For $f=h_m$, each lower-index $h_j$ is eventually ruled out as $\bar{\cX}$ reveals its disagreement with $f$, leaving $h_m$ as the first such hypothesis. Conversely, fix any hierarchy. If a learner identifies every target after some finite prefix, then each target is associated with at least one finite labeled prefix on which the learner outputs it. Since there are only countably many finite labeled prefixes, $\cH$ must be countable.
\end{proof}

The countability characterization is still learner-dependent: it says that some chosen rule eventually outputs the target, not that the observed labels have ruled out every other hypothesis. A learner may identify the target by choosing one hypothesis consistent with the observed labels while other hypotheses still agree with those labels. The following example makes this separation explicit.

\begin{proposition}
\label{thm:a1-without-collapse}
There exist a countable hypothesis class $\cH$, a hierarchy $\bar{\cX}$, and a successful learner $A$ for $\cH$ over $\bar{\cX}$ (namely, for every $f \in \cH$, $N_A^{\bar{\cX}}(f)<\infty$) so that for some $f^* \in \cH$ for every finite level $n$, there exists some $g_n\in\cH$ so that $g_n|_{\cX_n}=f^*|_{\cX_n}$, and yet $g_n\neq f^*$ (namely, for some $x \in \Omega$, $f^*(x)\neq g_n(x)$).
\end{proposition}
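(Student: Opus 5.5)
We give an explicit construction. Take $\Omega=\mathbb{N}_{\geq 1}$ with the point-by-point hierarchy $\cX_n=\{1,\ldots,n\}$. Let $f^*\equiv 0$. For each $k\geq 1$ let $g_k=\mathbf{1}_{\{k\}}$, the indicator of the singleton $\{k\}$. Set $\cH=\{f^*\}\cup\{g_k:k\geq 1\}$, which is countable. The witness required by the statement is then immediate: for each level $n$ take $g_n:=g_{n+1}$. It vanishes on $\{1,\ldots,n\}=\cX_n$, so $g_{n+1}|_{\cX_n}=f^*|_{\cX_n}$, and yet $g_{n+1}(n+1)=1\neq f^*(n+1)$. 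So no finite level collapses the version space around $f^*$.

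The remaining work is to exhibit a learner $A$ with $N_A^{\bar{\cX}}(f)<\infty$ for every $f\in\cH$. We use the enumeration learner from the proof sketch of \Cref{thm:A1-countable}, with $f^*$ listed first. On input $h|_{\cX}$ for a finite $\cX$:
\begin{itemize}
\item if some $x\in\cX$ has label $1$, output $g_x$;
\item otherwise output $f^*$.
\end{itemize}
This is well defined because every labeled training set comes from some hypothesis in $\cH$, and each $g_k$ has exactly one positive point.

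Now we check identification.
\begin{itemize}
\item \textbf{Target $f^*$.} Every $f^*|_{\cX_n}$ is all zeros, so $A$ outputs $f^*$ at every level and $N_A^{\bar{\cX}}(f^*)=1$.
\item \textbf{Target $g_k$.} For every $n\geq k$ the point $k\in\cX_n$ carries label $1$, so $A$ outputs $g_k$. Hence $N_A^{\bar{\cX}}(g_k)\leq k$, which is finite.
\end{itemize}

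There is no real obstacle here. The only point needing care is that the definition of domain complexity requires correctness at \emph{all} levels $n\geq N$, not just at one. Both cases above already give this, since the outputs are stable from the stated level onward. Note also that the ambiguity around $f^*$ persists forever, yet $A$ is correct on $f^*$ from level $1$. This is exactly the learner-dependent identification that the proposition is meant to illustrate.
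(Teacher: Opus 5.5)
Your proof is correct and follows the paper's proof: it uses the same construction (the all-zero function plus singleton indicators on $\mathbb{N}$, the hierarchy $\cX_n=\{1,\ldots,n\}$, the least-index consistent learner with $0$ listed first) and the same persistent witness $\mathbf{1}_{\{n+1\}}$ at level $n$. You also check explicitly that each singleton target is identified by level $k$, which the paper leaves implicit. One small fix: a learner must also be defined on inputs no hypothesis in the class realizes (e.g.\ two positive labels), so "output $g_x$ for some positive $x$" should be made definite, for instance by taking the least positive point.
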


\begin{proof}[Proof sketch]
Let $\Omega=\mathbb{N}$ and $\cH$ contain the all-zero function $0$ together with the singleton indicators $h_i=\mathds{1}_{\{i\}}$. Take the natural hierarchy $\cX_n=\{1,\ldots,n\}$ and the target $f=0$. A least-index learner that lists $0$ first identifies $0$ immediately, so $N_A^{\bar{\cX}}(0)<\infty$. But for every $n$ and every $j>n$, the singleton $h_j$ agrees with $0$ on $\cX_n$. Thus no finite prefix by itself rules out all alternatives to $0$.
\end{proof}

For the class in \Cref{thm:a1-without-collapse}, every finite set misses some singleton hypothesis, so the target $0$ is not finitely isolated. More generally, finite collapse of the version space is equivalent to the existence of a finite set of domain points that separates the target from all alternatives; this existence does not depend on the ordering of the hierarchy.

\begin{proposition}[Finite isolation is hierarchy-invariant]
\label{prop:finite-isolation-hierarchy-invariant}
Fix $f\in\cH$. For a hierarchy $\bar{\cX}$, write $V_{\cH,n}^{\bar{\cX}}(f):=\{g\in\cH:g|_{\cX_n}=f|_{\cX_n}\}$ for the version space around $f$ at level $n$. The following are equivalent:
\begin{enumerate}[label=(\roman*),itemsep=0.15ex,topsep=0.35ex]
    \item for some hierarchy $\bar{\cX}$, there is a finite level $n$ such that $V_{\cH,n}^{\bar{\cX}}(f)=\{f\}$;
    \item for every hierarchy $\bar{\cX}$, there is a finite level $n$ such that $V_{\cH,n}^{\bar{\cX}}(f)=\{f\}$;
    \item there is a finite set $S\subseteq\Omega$ such that every $g\in\cH\setminus\{f\}$ differs from $f$ on some point of $S$.
\end{enumerate}
\end{proposition}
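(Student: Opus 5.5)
I will prove the cycle of implications (ii)$\Rightarrow$(i)$\Rightarrow$(iii)$\Rightarrow$(ii). The first implication is immediate once we note that hierarchies exist. Since $\Omega$ is countably infinite, enumerate it as $x_1,x_2,\ldots$ and take $\cX_n=\{x_1,\ldots,x_n\}$. This satisfies \Cref{def:main-domain-hierarchy}, so the universal statement (ii) specializes to this hierarchy and gives (i).

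For (i)$\Rightarrow$(iii), fix the hierarchy $\bar{\cX}$ and level $n$ from (i) and set $S:=\cX_n$. This set is finite by \Cref{def:main-domain-hierarchy}. If $g\in\cH\setminus\{f\}$ agreed with $f$ on every point of $S$, then $g|_{\cX_n}=f|_{\cX_n}$, so $g\in V_{\cH,n}^{\bar{\cX}}(f)=\{f\}$. That contradicts $g\neq f$. Hence every such $g$ differs from $f$ somewhere on $S$.

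For (iii)$\Rightarrow$(ii), take the finite set $S$ from (iii) and an arbitrary hierarchy $\bar{\cX}$. The key step, which I expect to be the only one needing care, is that some single level $\cX_n$ contains all of $S$. Each $s\in S$ lies in $\bigcup_{m\geq1}\cX_m=\Omega$, so choose $m_s$ with $s\in\cX_{m_s}$. Put $n:=\max_{s\in S}m_s$; this is finite because $S$ is finite. If $S$ is empty, set $n:=1$ instead. Since the levels are nested, $S\subseteq\cX_n$.

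Now let $g\in V_{\cH,n}^{\bar{\cX}}(f)$. Then $g$ agrees with $f$ on $\cX_n\supseteq S$. If $g\neq f$, condition (iii) would give a point of $S$ where $g$ and $f$ differ, which is impossible, so $g=f$. Conversely $f$ lies in its own version space, so $V_{\cH,n}^{\bar{\cX}}(f)=\{f\}$.

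Note that if $S=\emptyset$, (iii) forces $\cH=\{f\}$, and the argument above still goes through. The only places the hierarchy axioms enter are finiteness of each $\cX_n$ (in (i)$\Rightarrow$(iii)) and the exhaustion together with nestedness (in (iii)$\Rightarrow$(ii)). The ordering within a hierarchy never matters, which is exactly the invariance the proposition claims.
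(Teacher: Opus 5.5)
Your proof is correct and matches the paper's argument: (ii)$\Rightarrow$(i) is immediate, the finite region $\cX_n$ from (i) serves as $S$, and for (iii)$\Rightarrow$(ii) every nested, exhaustive hierarchy contains the finite set $S$ at some level, where the version space collapses to $\{f\}$. Your extra checks (that a hierarchy exists, and the case $S=\emptyset$) only spell out what the paper treats as implicit.
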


One topological way to understand this is the following. For a finite $S\subseteq\Omega$ and $h\in\cH$, let $[h]_S:=\{g\in\cH:g|_S=h|_S\}$ be the set of hypotheses that agree with $h$ on the finite observation set $S$. These finite-observation cells generate a topology on $\cH$. For any finite-level hierarchy that exhausts $\Omega$, the same topology is generated by the level version spaces $V_{\cH,n}^{\bar{\cX}}(h)$. In this topology, saying that $\{f\}$ is open is exactly saying that some finite observation set isolates $f$, or equivalently that the version space around $f$ collapses at a finite level.

The next theorem gives several ways of relaxing the hierarchy quantifier in \Aone: one may fix a hierarchy in advance, or even choose the hierarchy separately for each target. These relaxations have the same threshold because they still ask only for target-by-target success. Thus the following statements are equivalent.

\begin{theorem}[Equivalent conditions for \Aone]
\label{thm:A1-equivalent-conditions}
The following are equivalent:
\begin{enumerate}[label=(\roman*),itemsep=0.15ex,topsep=0.35ex]
  \item $\exists A\;\forall f\in\cH\;\forall \bar{\cX}\;\exists N\in\mathbb{N}_{\geq 1}:\; N_A^{\bar{\cX}}(f)\leq N$ (that is, $\cH$ satisfies \Aone, \Cref{def:A1});
  \item $\exists A\;\exists \bar{\cX}\;\forall f\in\cH\;\exists N\in\mathbb{N}_{\geq 1}:\; N_A^{\bar{\cX}}(f)\leq N$;
  \item $\exists A\;\forall f\in\cH\;\exists \bar{\cX}\;\exists N\in\mathbb{N}_{\geq 1}:\; N_A^{\bar{\cX}}(f)\leq N$;\qquad \textup{(iv)} $\cH$ is countable.
\end{enumerate}
\end{theorem}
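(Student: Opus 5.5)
The plan is to prove the cycle (iv) $\Rightarrow$ (i) $\Rightarrow$ (ii) $\Rightarrow$ (iii) $\Rightarrow$ (iv). Two of these steps are purely formal. For (i) $\Rightarrow$ (ii), any hierarchy exists because $\Omega$ is countable: take $\cX_n=\{x_1,\dots,x_n\}$ for an enumeration of $\Omega$. The same learner $A$ then works along that fixed hierarchy for every $f$. For (ii) $\Rightarrow$ (iii), we choose, for each $f$, the single hierarchy supplied by (ii); this only weakens the quantifier order. The substantive content lies in (iv) $\Rightarrow$ (i) and (iii) $\Rightarrow$ (iv), and both are already sketched in \Cref{thm:A1-countable}.

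For (iv) $\Rightarrow$ (i), enumerate $\cH=\{h_1,h_2,\dots\}$; the finite case is handled the same way. Let $A(f|_\cX)$ output the least-index $h_j$ consistent with $f|_\cX$. If no hypothesis is consistent, the output is arbitrary, but this never happens on realizable input. Fix $f=h_m$, taking $m$ minimal among indices representing $f$, and fix any hierarchy $\bar{\cX}$. For each $j<m$ we have $h_j\neq f$, so there is a point $x^{(j)}$ with $h_j(x^{(j)})\neq f(x^{(j)})$. Since $\bigcup_n\cX_n=\Omega$, each $x^{(j)}$ lies in some $\cX_{n_j}$, and the levels are cumulative. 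Set $N=\max_{j<m}n_j$, or $N=1$ if $m=1$. Then for every $n\geq N$, every $h_j$ with $j<m$ is inconsistent with $f|_{\cX_n}$ while $h_m$ is consistent, so $A(f|_{\cX_n})=h_m=f$. Hence $N_A^{\bar{\cX}}(f)\leq N$.

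For (iii) $\Rightarrow$ (iv), which is the main point, the quantifier over hierarchies now sits inside the one over $f$. The counting argument therefore cannot rely on a fixed hierarchy; it must count all finite labeled samples over all finite subsets of $\Omega$. Given $A$ as in (iii), choose for each $f\in\cH$ a hierarchy $\bar{\cX}^f$ and an $N_f$ with $N_A^{\bar{\cX}^f}(f)\leq N_f$. Then $A(f|_{\cX^f_{N_f}})=f$, so the map $\Phi:f\mapsto f|_{\cX^f_{N_f}}$ sends $\cH$ into the domain of $A$, namely $\bigcup_{\cX\subseteq\Omega,\,|\cX|<\infty}\{0,1\}^{\cX}$. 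This domain is countable: $\Omega$ has countably many finite subsets, and each carries finitely many labelings. Moreover, $A\circ\Phi=\mathrm{id}_{\cH}$, so $\Phi$ is injective and $\cH$ is countable. The only subtlety to check is that the domain of $A$ is indeed restricted to finite subsets, which holds by \Cref{def:main-learner}. Choosing a hierarchy per target (using choice, harmlessly) does not enlarge the set of possible inputs, and this is exactly why (iii) reduces to countability.
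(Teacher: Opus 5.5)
Your proposal is correct and follows the paper's proof essentially step for step. The paper uses the same formal implications (i)$\Rightarrow$(ii)$\Rightarrow$(iii), the least-index consistent learner for (iv)$\Rightarrow$(i) (where the paper simply cites \Cref{thm:A1-countable}), and, for (iii)$\Rightarrow$(iv), the injection of $\cH$ into the countable set of finite labeled samples over all finite subsets of $\Omega$. Your extras are harmless refinements: you take $m$ minimal to allow repetitions in the enumeration, and you exhibit a concrete hierarchy for (i)$\Rightarrow$(ii).
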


\subsection{Target-Uniform Domain-Dependent Generalization}
\label{sec:A2-characterization}

\Atwo\ asks for a different kind of success. Once the hierarchy is fixed, the same prefix level must work for every target: the amount of evidence is independent of which $f\in\cH$ generated the labels.

\begin{theorem}[\Atwo\ $\Leftrightarrow$ Finite $\cH$]
\label{thm:A2-finite}
$\cH$ is target-uniformly identifiable (\Atwo, \Cref{def:A2}) if and only if $\cH$ is finite.
\end{theorem}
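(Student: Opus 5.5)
Both directions are straightforward; the substance lies in the converse, which I would handle by a pigeonhole argument on finite prefixes.

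\emph{Sufficiency.} Suppose $\cH=\{h_1,\ldots,h_m\}$ is finite, and let $A$ be the least-index consistent learner from the proof of \Cref{thm:A1-countable}: on input $f|_{\cX}$ it outputs the first $h_j$ agreeing with the labels on $\cX$. Fix a hierarchy $\bar{\cX}$. For each of the finitely many pairs $j\neq k$, choose a point $x_{jk}\in\Omega$ with $h_j(x_{jk})\neq h_k(x_{jk})$. Since the levels exhaust $\Omega$, this point lies in some level $\cX_{n_{jk}}$. Set
\[
N:=\max_{j\neq k} n_{jk}
\]
(or $N=1$ if $m\le 1$). For every $n\ge N$ and every target $f=h_k$, each $h_j$ with $j\neq k$ is refuted on $\cX_n$, so $A(f|_{\cX_n})=f$. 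Hence $N_A^{\bar{\cX}}(f)\le N$ for all $f$, which is exactly \Atwo. Note that $N$ depends on $\bar{\cX}$, as the definition allows.

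\emph{Necessity.} Suppose $A$ witnesses \Atwo. Fix any single hierarchy $\bar{\cX}$, say an enumeration of $\Omega$. Let $N$ be the common bound, so that for every $f\in\cH$ we have $A(f|_{\cX_N})=f$. Then the map $\cH\to\{0,1\}^{\cX_N}$ given by $f\mapsto f|_{\cX_N}$ is injective, because $f$ is recovered from $f|_{\cX_N}$ by applying $A$. Since $\cX_N$ is finite, the codomain has size at most $2^{|\cX_N|}$, and therefore
\[
|\cH|\le 2^{|\cX_N|}<\infty .
\]
The extension to finite label sets follows identically.

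I expect no step to be a real obstacle. The only point needing care is in sufficiency: the bound $N$ must be uniform over targets, and this rests on there being finitely many distinguishing points. For a countable $\cH$ the maximum over pairs would be infinite, and this is precisely the gap between \Aone\ and \Atwo.
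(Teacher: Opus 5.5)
Your proof is correct and follows the paper's argument. For sufficiency you use the least-index consistent learner and take a maximum over the finitely many pairwise distinguishing levels; the paper maximizes the per-target levels $N_A^{\bar{\cX}}(f)$ instead, which amounts to the same thing. For necessity, both use the injectivity of $f\mapsto f|_{\cX_N}$ on a single fixed hierarchy.
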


\begin{proof}[Proof sketch]
With $\cH$ finite, the least-index learner from \Cref{thm:A1-countable} achieves a common bound on any hierarchy: $\max_{f\in\cH} N_A^{\bar{\cX}}(f)<\infty$. Conversely, a common bound $N$ on a single hierarchy makes the labeled-prefix map $f\mapsto f|_{\cX_N}$ injective on $\cH$; otherwise two targets with the same prefix would give the learner the same input at level $N$. Hence $|\cH|\leq 2^{|\cX_N|}$.
\end{proof}

The proof isolates the mechanism behind \Atwo: at a common level, two targets with the same prefix are indistinguishable to any learner. The relevant quantity is therefore the first prefix that separates the class.

\begin{definition}[Class domain complexity]
\label{def:class-domain-complexity-main}
For a hypothesis class $\cH$ and a hierarchy $\bar{\cX}$, the class domain complexity is
\begin{equation}
  N^{\bar{\cX}}(\cH)
  :=
  \min\{n\in\mathbb{N}_{\geq 1}:\forall f,g\in\cH,\ f\neq g \Longrightarrow \exists x\in\cX_n: f(x)\neq g(x)\}.
\end{equation}
If no such finite level exists, set $N^{\bar{\cX}}(\cH):=\infty$.
\end{definition}

\begin{proposition}[Optimality of class domain complexity]
\label{prop:class-domain-complexity-optimal}
For every hierarchy $\bar{\cX}$, $N^{\bar{\cX}}(\cH)=\min_A \sup_{f\in\cH} N_A^{\bar{\cX}}(f)$, where $A$ ranges over learners and both sides take values in $\mathbb{N}_{\geq 1}\cup\{\infty\}$.
\end{proposition}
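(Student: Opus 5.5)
We prove the equality by establishing the two inequalities separately, writing $N^\star:=N^{\bar{\cX}}(\cH)$ for brevity. Note first that the minimum on the right-hand side is attained, since the values lie in the well-ordered set $\mathbb{N}_{\geq 1}\cup\{\infty\}$, so the statement is meaningful.

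\textbf{Upper bound: there is a learner achieving $N^\star$.} If $N^\star=\infty$ there is nothing to show, so assume $N^\star<\infty$. Consider the learner $A$ that on input $f|_{\cX}$ outputs some hypothesis in $\cH$ consistent with the observed labels (any fixed choice will do), and outputs an arbitrary predictor if no such hypothesis exists. Fix $f\in\cH$ and $n\geq N^\star$. Since $\cX_n\supseteq\cX_{N^\star}$, every $g\in\cH$ with $g\neq f$ disagrees with $f$ somewhere in $\cX_{N^\star}\subseteq\cX_n$. Hence $f$ is the unique consistent hypothesis, and $A(f|_{\cX_n})=f$. By \Cref{def:main-domain-complexity} this gives $N_A^{\bar{\cX}}(f)\leq N^\star$, so $\sup_f N_A^{\bar{\cX}}(f)\leq N^\star$.

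\textbf{Lower bound: no learner beats $N^\star$.} Let $A$ be any learner with $M:=\sup_{f}N_A^{\bar{\cX}}(f)$, and assume $M<\infty$ (otherwise the inequality is trivial). We show $N^\star\leq M$. Suppose $f\neq g$ in $\cH$ agree on $\cX_M$. Then $f|_{\cX_M}$ and $g|_{\cX_M}$ are the same labeled set, so $A$ receives identical inputs at level $M$. But $M\geq N_A^{\bar{\cX}}(f)$ and $M\geq N_A^{\bar{\cX}}(g)$ force $A(f|_{\cX_M})=f$ and $A(g|_{\cX_M})=g$, whence $f=g$, a contradiction. So every pair of distinct hypotheses is separated within $\cX_M$, which gives $N^\star\leq M$.

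The only delicate point is the bookkeeping in the lower bound: we must use that the definition of $N_A^{\bar{\cX}}$ requires correctness at every level $n\geq N_A^{\bar{\cX}}(f)$, and in particular at the single common level $M$. We must also use that learners see only the labeled set, not the target, as in \Cref{def:main-learner}. The conventions for $\infty$ are handled by the trivial cases noted above.
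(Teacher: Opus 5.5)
Your proof is correct and follows the paper's argument essentially verbatim. The lower bound notes that a learner with a finite uniform level $M$ receives identical inputs at level $M$ under any two targets agreeing on $\cX_M$, and the upper bound uses the learner that outputs the (then unique) consistent hypothesis once $\cX_{N^{\bar{\cX}}(\cH)}$ separates all pairs. Your remark that the minimum is attained because the values lie in a well-ordered set is a small detail the paper leaves implicit.
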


Thus the optimal fixed-hierarchy \Atwo\ bound is algorithm-agnostic, and the variants below are equivalent: a common bound along one hierarchy already forces $\cH$ to be finite, while finiteness gives a common bound along every hierarchy.

\begin{theorem}[Equivalent conditions for \Atwo]
\label{thm:target-uniform-equivalences}
\label{thm:fixed-hierarchy-equivalences}
\label{prop:fixed-hierarchy-common-bound}
The following are equivalent:
\begin{enumerate}[label=(\roman*),itemsep=0.15ex,topsep=0.35ex]
  \item $\exists A\;\forall \bar{\cX}\;\exists N\in\mathbb{N}_{\geq 1}\;\forall f\in\cH:\; N_A^{\bar{\cX}}(f)\leq N$ (that is, $\cH$ satisfies \Atwo, \Cref{def:A2});
  \item $\exists A\;\exists \bar{\cX}\;\exists N\in\mathbb{N}_{\geq 1}\;\forall f\in\cH:\; N_A^{\bar{\cX}}(f)\leq N$;
  \item $\exists \bar{\cX}:\; N^{\bar{\cX}}(\cH)<\infty$; \quad \textup{(iv)} $\forall \bar{\cX}:\; N^{\bar{\cX}}(\cH)<\infty$; \quad \textup{(v)} $\cH$ is finite.
\end{enumerate}
\end{theorem}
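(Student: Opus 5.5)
I will prove the cycle of implications (v) $\Rightarrow$ (i) $\Rightarrow$ (ii) $\Rightarrow$ (iii) $\Rightarrow$ (v). I will then close (iv) separately, via (v) $\Rightarrow$ (iv) $\Rightarrow$ (iii).

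\textbf{(v) $\Rightarrow$ (i).} Enumerate the finite class as $\cH=\{h_1,\ldots,h_k\}$. Let $A$ be the least-index consistent learner from \Cref{thm:A1-countable}, which outputs an arbitrary default when no hypothesis is consistent. Fix a hierarchy $\bar{\cX}$. For each pair $i\neq j$, pick a point $x_{ij}$ with $h_i(x_{ij})\neq h_j(x_{ij})$. Since $\bigcup_n\cX_n=\Omega$ and the $\cX_n$ are nested, there is a level $n_{ij}$ with $x_{ij}\in\cX_{n_{ij}}$. Put $N:=\max_{i\neq j} n_{ij}$, which is finite because there are finitely many pairs. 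For every $n\geq N$ and every target $f=h_m$, the only hypothesis consistent with $f|_{\cX_n}$ is $h_m$, so $A(f|_{\cX_n})=f$. Hence $N_A^{\bar{\cX}}(f)\leq N$ for all $f$.

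\textbf{(i) $\Rightarrow$ (ii).} This is immediate: specialize to any single hierarchy. One exists, for example by enumerating the countable set $\Omega$ as $\cX_n=\{x_1,\ldots,x_n\}$.

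\textbf{(ii) $\Rightarrow$ (iii).} This is the key injectivity step. Suppose $A$, $\bar{\cX}$ and $N$ satisfy (ii), and take $f\neq g$ in $\cH$. If $f|_{\cX_N}=g|_{\cX_N}$, then $A$ receives literally the same input, so $f=A(f|_{\cX_N})=A(g|_{\cX_N})=g$, a contradiction. Here I use that $N_A^{\bar{\cX}}(f)\leq N$ forces correctness at level exactly $N$. So every distinct pair differs on $\cX_N$, and $N^{\bar{\cX}}(\cH)\leq N<\infty$.

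\textbf{(iii) $\Rightarrow$ (v).} If $N^{\bar{\cX}}(\cH)=n<\infty$, the restriction map $f\mapsto f|_{\cX_n}$ is injective from $\cH$ into $\{0,1\}^{\cX_n}$. Therefore $|\cH|\leq 2^{|\cX_n|}<\infty$, because $\cX_n$ is finite.

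\textbf{Closing (iv).} For (v) $\Rightarrow$ (iv), the separating-point argument in the first step already shows that $\cX_N$ separates all pairs for every hierarchy. For (iv) $\Rightarrow$ (iii), it suffices that at least one hierarchy exists.

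\textbf{Expected obstacle.} None of these steps is deep. The only point needing care is (ii) $\Rightarrow$ (iii): the learner is a function of the labeled set alone, so identical restrictions give identical outputs. Equivalently, one can invoke \Cref{prop:class-domain-complexity-optimal}, which yields $N^{\bar{\cX}}(\cH)\leq\sup_f N_A^{\bar{\cX}}(f)$ directly. I would present the direct argument so the proof stays self-contained.
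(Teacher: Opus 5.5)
Your proposal is correct and follows essentially the same route as the paper: the same-input argument for (ii)$\Rightarrow$(iii), the restriction-map count for (iii)$\Rightarrow$(v), and a maximum over finitely many separating points for (v)$\Rightarrow$(iv); the only difference is that you prove (v)$\Rightarrow$(i) directly instead of citing \Cref{thm:A2-finite}. You should also handle the degenerate case $|\cH|\leq 1$, where the maximum over pairs is empty and you simply take $N=1$, as the paper does.
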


\subsection{Domain-Uniform Generalization}
\label{sec:A3-A4-characterization}

\begin{theorem}[\Athree/\Afour\ characterization]
\label{thm:A3-A4-characterization}
\label{thm:A3-brittle}
Assume $|\cH|>2$.
The following are equivalent:
\begin{enumerate}[label=(\roman*),itemsep=0.15ex,topsep=0.35ex]
  \item $\cH$ is uniformly identifiable (\Afour, \Cref{def:A4});
  \item $\cH$ is domain-uniformly identifiable (\Athree, \Cref{def:A3});
  \item $\Omega$ is finite.
\end{enumerate}
\end{theorem}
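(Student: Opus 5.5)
The plan is to prove the cycle (i)$\Rightarrow$(ii)$\Rightarrow$(iii)$\Rightarrow$(i). The step (i)$\Rightarrow$(ii) is a formal weakening: the single $N$ from \Afour\ serves as the target-dependent $N$ in \Athree.

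For (iii)$\Rightarrow$(i), suppose $\Omega$ is finite. A hierarchy is strictly increasing until it reaches $\Omega$ (with the convention of the footnote that it then stays at $\Omega$). Each strict step adds at least one point, so $\cX_n=\Omega$ for every $n\geq|\Omega|$, for every hierarchy. Let $A$ be any learner that, on input $f|_{\Omega}$, outputs the labeling it was given; note $f|_\Omega$ is just $f$ itself. Then $N_A^{\bar{\cX}}(f)\leq|\Omega|$ for every $f\in\cH$ and every $\bar{\cX}$, so \Afour\ holds with $N=|\Omega|$.

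The substantive direction is (ii)$\Rightarrow$(iii), which I would prove by contradiction. Suppose $\Omega$ is countably infinite and $A$ witnesses \Athree.

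\emph{Step 1 (use $|\cH|>2$).} Pick three distinct $f_1,f_2,f_3\in\cH$. At each $x\in\Omega$ the three binary values $f_1(x),f_2(x),f_3(x)$ cannot all differ, so some pair agrees at $x$. There are only three pairs and infinitely many points, so by pigeonhole some pair, say $f\neq g$, agrees on an infinite set $E\subseteq\Omega$. I expect this to be the main point of the proof: it is exactly where $|\cH|>2$ enters. For $\cH=\{h,1-h\}$ there is no such $E$, and the first observed point already identifies the target.

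\emph{Step 2 (hostile hierarchy).} Let $N_f$ and $N_g$ be the hierarchy-independent bounds from \Athree, and set $M:=\max(N_f,N_g)$. Choose distinct points $e_1,\ldots,e_M\in E$ and put $\cX_n=\{e_1,\ldots,e_n\}$ for $n\leq M$. Beyond level $M$, enumerate $\Omega\setminus\cX_M=\{y_1,y_2,\ldots\}$ and let $\cX_{M+k}=\cX_M\cup\{y_1,\ldots,y_k\}$. This is a valid hierarchy in the sense of \Cref{def:main-domain-hierarchy}: its levels are finite, strictly increasing, and exhaust $\Omega$.

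\emph{Step 3 (contradiction).} Since $\cX_M\subseteq E$, we have $f|_{\cX_M}=g|_{\cX_M}$ as labeled training sets, so $A(f|_{\cX_M})=A(g|_{\cX_M})$. But $M\geq N_A^{\bar{\cX}}(f)$ and $M\geq N_A^{\bar{\cX}}(g)$, so by \Cref{def:main-domain-complexity} this common output must equal both $f$ and $g$. That contradicts $f\neq g$, so $\Omega$ must be finite.
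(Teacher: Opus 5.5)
Your proposal is correct and follows the paper's proof: the same hostile hierarchy that places $M=\max\{N_f,N_g\}$ agreement points first, combined with the fact that the pairwise agreement sets of three binary hypotheses cover $\Omega$. The differences are cosmetic. You argue by contradiction, first finding a pair with an infinite agreement set, whereas the paper shows directly that every pairwise agreement set is finite and then takes the union. In (iii)$\Rightarrow$(i), your learner returns the full revealed labeling instead of the unique consistent hypothesis.
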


\begin{proof}[Proof sketch]
\emph{(i) implies (ii).} This is immediate. \emph{(iii) implies (i).} If $\Omega$ is finite, then every hierarchy reveals all of $\Omega$ by level $|\Omega|$. At that level the observed labels determine the target on the entire domain. A learner that outputs the unique hypothesis in $\cH$ consistent with the full-domain labels identifies every target under every hierarchy by level $|\Omega|$.

\emph{(ii) implies (iii).} Suppose \Athree\ holds through a learner $A$ and target-dependent levels $(N_f)_{f\in\cH}$. If two distinct hypotheses $f,g$ agreed on at least $M=\max\{N_f,N_g\}$ points, choose a finite set $S\subseteq\{f=g\}$ of size $M$ and a hierarchy with $\cX_M=S$. Since $f|_S=g|_S$, the learner receives the same input under both targets at level $M$, but \Athree\ requires the same output to equal both $f$ and $g$, a contradiction. Hence every pairwise agreement set is finite.

Since $|\cH|>2$, choose three distinct hypotheses $h_1,h_2,h_3\in\cH$. Write $\{h_i=h_j\}:=\{x\in\Omega:h_i(x)=h_j(x)\}$. At each $x\in\Omega$, two of the three binary labels must agree, so
\begin{equation}
  \{h_1 = h_2\} \,\cup\, \{h_1 = h_3\} \,\cup\, \{h_2 = h_3\} \,=\, \Omega.
\end{equation}
The three sets on the left are finite by the previous paragraph. Hence $\Omega$ is finite.
\end{proof}

Under the standing assumption of this paper, $\Omega$ is countably infinite. Thus for every class with $|\cH|>2$, both \Athree\ and \Afour\ are impossible. The only remaining cases are degenerate. Singleton classes are trivial. For a two-element class $\cH=\{h,g\}$, domain-uniform (\Athree), equivalently uniform (\Afour), identification is possible exactly when $h$ and $g$ agree on only finitely many points: then any sufficiently large observed region contains a point where they differ, and the label at that point identifies the target.

\paragraph{Why Domain Generalization Is Hard.}
The obstruction above is not a usual PAC-learning obstruction from VC dimension, class size, or agnostic noise. It comes from the train/test partition itself. The useful way to read the obstruction is adversarial:

Suppose $\Omega$ is infinite and take any three distinct hypotheses $h_1,h_2,h_3\in\cH$. Then some pair must agree on infinitely many points. Indeed, if both $\{h_1=h_2\}$ and $\{h_1=h_3\}$ are finite, then their union is finite, so on infinitely many points $h_2$ and $h_3$ both take the binary label opposite to $h_1$. Hence $\{h_2=h_3\}$ is infinite. Thus, from any class with $|\cH|>2$, an adversary can choose two distinct hypotheses $f,g$ with an infinite agreement set and place arbitrarily many of those agreement points at the beginning of the hierarchy. On any such prefix $S\subseteq\{f=g\}$, the learner receives the same labeled training set under the two possible targets. Whatever it outputs on that input, at least one of $f$ and $g$ is not identified.

%% file: sections/complexity_stratified.tex
\section{Stratified Domain Complexity}
\label{sec:complexity-stratified-domain-complexity}

The characterizations above are qualitative: finite classes admit target-uniform bounds along a fixed hierarchy, while infinite classes do not. To study infinite model families quantitatively, we stratify them by model complexity and ask how the best bound grows with the allowed complexity.

\paragraph{Complexity Sublevels.}
A natural refinement is to keep track of model complexity. Many representation classes of interest are not just infinite sets of functions: programs, automata, grammars, and implemented neural-network families come with finite descriptions or size parameters, such as description length,\footnote{For a prefix-code description length, Kraft's inequality implies that only finitely many descriptions have code length at most any fixed $s$. We use only this finite-slice consequence, not an MDL or probabilistic argument.} program length, number of states, grammar size, depth, width, or number of parameters. Such a scale $c:\cH\to\mathbb{N}$ organizes the class into bounded-complexity slices.

We now formalize this fixed-hierarchy quantity. Fix a hierarchy $\bar{\cX}$ on $\Omega$, a hypothesis class $\cH\subseteq\{0,1\}^{\Omega}$, and a model-complexity function $c:\cH\to\mathbb{N}$. For $s\in\mathbb{N}$, the \emph{sublevel class} of model complexity at most $s$ is
\begin{equation}
  \cH_{\leq s}^{c}:=\{f\in\cH:c(f)\leq s\}.
  \label{eq:complexity-sublevel-class}
\end{equation}
Throughout this section, we consider complexity scales whose sublevels are finite and exhaustive:
\begin{equation}
  |\cH_{\leq s}^{c}|<\infty\ \text{ for every }s\in\mathbb{N},
  \qquad
  \bigcup_{s\in\mathbb{N}}\cH_{\leq s}^{c}=\cH.
  \label{eq:finite-sublevel-exhaustion}
\end{equation}

\paragraph{Sublevel Bounds.}
Abstractly, every countable class admits such a stratification by enumeration. In applications, however, the scale $c$ is chosen because it reflects a representation of the hypotheses. The quantity of interest is the best target-uniform bound on each sublevel class.

\begin{definition}[Sublevel domain complexity]
\label{def:fixed-hierarchy-sublevel-domain-complexity}
The \emph{sublevel domain complexity} of $(\cH,c)$ along $\bar{\cX}$ is the function $N_{(\cH,c)}^{\bar{\cX}}:\mathbb{N}\to\mathbb{N}_{\geq 1}\cup\{\infty\}$ defined by
\begin{equation}
  \begin{aligned}
  N_{(\cH,c)}^{\bar{\cX}}(s)
  := N^{\bar{\cX}}(\cH_{\leq s}^{c})
  =
  \min\big\{i\in\mathbb{N}_{\geq 1}: \forall h\neq h'\in\cH_{\leq s}^{c},\ \exists x\in\cX_i
  \text{ s.t. } h(x)\neq h'(x)\big\}.
  \end{aligned}
  \label{eq:fixed-hierarchy-sublevel-domain-complexity}
\end{equation}
If no such $i$ exists, set $N_{(\cH,c)}^{\bar{\cX}}(s):=\infty$.
\end{definition}

Under the finite-sublevel assumption, each sublevel class has some finite target-uniform identifying level along the fixed hierarchy. The next result shows that the smallest possible such bound is exactly the sublevel domain complexity just defined.

\begin{theorem}[Sublevel domain complexity is the optimal rate]
\label{thm:sublevel-domain-complexity-optimal-rate}
Fix a hierarchy $\bar{\cX}$, a class $\cH\subseteq\{0,1\}^{\Omega}$, and a complexity scale $c:\cH\to\mathbb{N}$ satisfying \Cref{eq:finite-sublevel-exhaustion}. Then, for every $s\in\mathbb{N}$ with $\cH_{\leq s}^{c}\neq\emptyset$,
\begin{equation}
  N_{(\cH,c)}^{\bar{\cX}}(s)
  =
  \min_A \sup_{f\in\cH_{\leq s}^{c}} N_A^{\bar{\cX}}(f),
\end{equation}
where $A$ ranges over learners. If, in addition, $|\cH_{\leq s}^{c}|\to\infty$ as $s\to\infty$, then $N_{(\cH,c)}^{\bar{\cX}}(s)\to\infty$.
\end{theorem}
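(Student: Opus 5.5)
The first claim is \Cref{prop:class-domain-complexity-optimal} applied to the class $\cH_{\leq s}^{c}$. For a fixed sublevel class this is the whole argument, and I would verify its two inequalities directly so the theorem stands on its own. The second claim will follow from a counting bound.

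\emph{Upper bound.} Set $\cG:=\cH_{\leq s}^{c}$. It is finite by \Cref{eq:finite-sublevel-exhaustion}, so $M:=N^{\bar{\cX}}(\cG)<\infty$ by \Cref{thm:target-uniform-equivalences}(iv). Define the learner $A$ as follows: on input $f|_{\cX}$, it outputs the unique $g\in\cG$ consistent with the labels if exactly one exists, and an arbitrary fixed hypothesis otherwise. Take $f\in\cG$ and $n\geq M$. Since $\cX_M\subseteq\cX_n$, any $g\in\cG\setminus\{f\}$ differs from $f$ somewhere in $\cX_n$. Hence $f$ is the unique consistent member and $A(f|_{\cX_n})=f$. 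So $\sup_{f\in\cG}N_A^{\bar{\cX}}(f)\leq M$.

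\emph{Lower bound.} Let $A$ be any learner with $\sup_{f\in\cG}N_A^{\bar{\cX}}(f)=K<M$. When $M=1$ there is nothing to prove, since every domain complexity is at least $1$. Otherwise, by minimality of $M$, there are $f\neq g$ in $\cG$ that agree on $\cX_K$. The learner then receives the identical input $f|_{\cX_K}=g|_{\cX_K}$ under both targets, yet must output both $f$ and $g$, which is a contradiction. Hence the minimum over $A$ equals $M$, and it is attained by the learner above. The hypothesis $\cG\neq\emptyset$ only ensures the supremum is not over an empty set.

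\emph{Divergence.} If $N^{\bar{\cX}}(\cG)=i$, the restriction map $h\mapsto h|_{\cX_i}$ is injective on $\cG$, so $|\cG|\leq 2^{|\cX_i|}$. For any bound $B$, the quantity $2^{|\cX_B|}$ is finite. Once $|\cH_{\leq s}^{c}|>2^{|\cX_B|}$, we must have $|\cX_i|>|\cX_B|$. Because the regions $\cX_n$ are strictly nested, this forces $i>B$, so $N_{(\cH,c)}^{\bar{\cX}}(s)\to\infty$.

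The only delicate point is that the learner must be a single map on finite labeled sets defined for all inputs. This is why $A$ needs the fallback rule for inputs that are not consistent with a unique member of $\cG$. Everything else is routine.
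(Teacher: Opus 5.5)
Your proposal is correct and follows the paper's proof. The identity is \Cref{prop:class-domain-complexity-optimal} applied to the finite class $\cH_{\leq s}^{c}$, and your direct check reproduces that proposition's unique-consistent-learner upper bound and indistinguishable-pair lower bound; divergence comes from the same injectivity bound $|\cH_{\leq s}^{c}|\leq 2^{|\cX_i|}$, which you apply directly for each $B$ where the paper argues by contradiction.
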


Thus finite sublevels give the qualitative guarantee, while \Cref{thm:sublevel-domain-complexity-optimal-rate} identifies the best possible target-uniform bound at each complexity level. When the sublevel classes grow, these bounds must also diverge. The remaining interesting question is how fast the sublevel domain complexity function grows.

%% file: sections/length_generalization.tex
\section{Length Generalization as a Case Study}
\label{sec:length-gen}

Length generalization is the fixed-hierarchy case in which the hierarchy is ordered by input length and the quantitative question is how sublevel domain complexity grows with model complexity.

Concretely, length generalization asks whether a model trained on short inputs succeeds on longer inputs. This question is especially visible for Transformers~\citep{vaswani2017attention} trained on algorithmic tasks, where models can fit all short instances yet fail sharply on longer ones~\citep{anil2022exploring,lee2023teaching,zhou2023algorithms}.

\begin{definition}[Length hierarchy]
\label{def:length-hierarchy}
Formally, fix a finite alphabet $\Sigma$ and consider the hierarchy
\begin{equation}
  \Omega:=\Sigma^+,
  \qquad
  \bar{\Sigma}:=(\Sigma^{\leq N})_{N\geq 0},
  \qquad
  \Sigma^{\leq 0}:=\emptyset,
  \qquad
  \Sigma^{\leq N}:=\Sigma\cup\Sigma^2\cup\cdots\cup\Sigma^N
  \quad (N\geq 1).
\end{equation}
A learner trained on lengths at most $N$ observes all labels on $\Sigma^{\leq N}$ and is evaluated on all of $\Sigma^+$. Thus length generalization is exact identification under the fixed cumulative hierarchy $\bar{\Sigma}$.
\end{definition}

We write the representation class as $\cH\subseteq\{0,1\}^{\Sigma^+}$.\footnote{The binary-valued language notation is only for convenience; the same setup extends directly to finite-output maps such as functions $\Sigma^*\to\Sigma$.} Here $\cH$ may be a class of languages, programs, or Transformer-realizable predictors, and $c$ may encode description length, program size, Transformer depth, number of heads, parameter precision, norm bounds, or related architectural parameters. The quantitative content of non-asymptotic length generalization lies in estimating the growth of $N_{(\cH,c)}^{\bar{\Sigma}}(s)$ for concrete choices of $(\cH,c)$.

\paragraph{Existing Results.}
\citet{huang2025formal} give a qualitative fixed-hierarchy, target-wise guarantee for Transformer length generalization: under an idealized regularized inference procedure, targets in their Limit-Transformer class are eventually identified along $\bar{\Sigma}$. This does not estimate a sublevel domain-complexity growth rate.

More quantitatively, \citet{chen2025nonasymptotic} estimate this sublevel domain complexity for several representation classes. For DFAs, the model class is regular languages and the complexity measure is the number of states; their bound is $2q-2$ for targets recognized by DFAs with at most $q$ states. For linear CFGs, the complexity measure is grammar description size, and the analogous sublevel bound has no computable upper bound in that size. For restricted C-RASP,\footnote{C-RASP programs compile to softmax Transformers~\citep{yang2024counting}; hence C-RASP gives a programmable lower bound on softmax-Transformer expressivity and is often used as a proxy for studying Transformer expressiveness.} the relevant parameters are precision $T$ in the one-layer Transformer setting, and precision $T$ together with $K$ heads in the two-layer setting; the resulting bounds are $O(T^2)$ and $O(T^{O(K)})$, respectively.

Later work continues the same calculation for more Transformer-facing classes. \citet{yang2026length} study the same sublevel domain-complexity function for C-RASP, proving uncomputability for full C-RASP and tight exponential growth for positive C-RASP. \citet{izzo2025quantitative} move to an approximate analogue: exact identification is replaced by approximate agreement on all longer inputs, with bounds in terms of norms, locality, periodicity, precision, and margins.

\paragraph{Beyond Length.}
The preceding results all concern the fixed length hierarchy $\bar{\Sigma}$. The uncomputability statements in this literature should not be interpreted as a peculiarity of ordering strings by length. The obstruction can already come from the representation of $\cH$: if semantic equivalence of descriptions is undecidable, then the sublevel domain-complexity function has no computable upper bound in the complexity parameter $s$.

To discuss computability, we use a represented class: a finite description $p\in\Pi$ denotes a computable hypothesis $h_p:\Omega\to\{0,1\}$, and $\kappa:\Pi\to\mathbb{N}$ is a computable complexity scale on descriptions. For a complexity bound $s$ and hierarchy $\bar{\cX}$, write
\begin{equation}
  \cH_{\leq s}^{\Pi,\kappa}:=\{h_p:p\in\Pi,\ \kappa(p)\leq s\},
  \qquad
  N_{(\Pi,\kappa)}^{\bar{\cX}}(s):=N^{\bar{\cX}}(\cH_{\leq s}^{\Pi,\kappa}).
\end{equation}
The quantity on the right is still the information-theoretic domain complexity from \Cref{def:fixed-hierarchy-sublevel-domain-complexity}. A hierarchy is computable if its finite levels can be listed, and two descriptions are semantically equivalent if their denoted hypotheses agree on all of $\Omega$.

\begin{proposition}[Computability obstruction]
\label{prop:computability-obstruction}
If semantic equivalence is undecidable for descriptions in $\Pi$, then for every computable exhaustive hierarchy $\bar{\cX}$ on $\Omega$, there is no computable function $B:\mathbb{N}\to\mathbb{N}$ such that
\begin{equation}
  N_{(\Pi,\kappa)}^{\bar{\cX}}(s)\leq B(s)
  \qquad\text{for all }s\in\mathbb{N},
\end{equation}
\end{proposition}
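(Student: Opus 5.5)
We argue by contradiction: a computable bound $B$ would give a decision procedure for semantic equivalence. Suppose $\bar{\cX}$ is a computable exhaustive hierarchy and $B:\mathbb{N}\to\mathbb{N}$ is computable with $N_{(\Pi,\kappa)}^{\bar{\cX}}(s)\leq B(s)$ for all $s$. Given two descriptions $p,q\in\Pi$, the procedure does the following.
\begin{enumerate}[label=(\arabic*),itemsep=0.15ex,topsep=0.35ex]
  \item Compute $s:=\max\{\kappa(p),\kappa(q)\}$. This is possible because $\kappa$ is computable, and it gives $h_p,h_q\in\cH_{\leq s}^{\Pi,\kappa}$.
  \item Compute $M:=B(s)$.
  \item List the finite set $\cX_M$, which is possible because the hierarchy is computable.
  \item Evaluate $h_p(x)$ and $h_q(x)$ for every $x\in\cX_M$. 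Each $h_p$ is a computable total hypothesis, so these evaluations halt.
  \item Output ``equivalent'' if and only if $h_p|_{\cX_M}=h_q|_{\cX_M}$.
\end{enumerate}

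\emph{Correctness.} If $h_p=h_q$, the procedure clearly says ``equivalent.'' Conversely, suppose $h_p\neq h_q$. Both lie in the sublevel class $\cH_{\leq s}^{\Pi,\kappa}$. By \Cref{def:class-domain-complexity-main}, with $N:=N^{\bar{\cX}}(\cH_{\leq s}^{\Pi,\kappa})$, some $x\in\cX_N$ has $h_p(x)\neq h_q(x)$. Since $N\leq B(s)=M$ and the levels are nested, $\cX_N\subseteq\cX_M$. So the disagreement is detected and the procedure says ``not equivalent.''

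Hence the procedure decides semantic equivalence on $\Pi$, contradicting the hypothesis. No such computable $B$ exists.

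\emph{Main obstacle.} The delicate step is the finiteness requirement: the bound must make $N$ finite for every $s$. Two points need checking.
\begin{itemize}[itemsep=0.15ex,topsep=0.35ex]
  \item A computable $B$ with values in $\mathbb{N}$ already forces $N_{(\Pi,\kappa)}^{\bar{\cX}}(s)<\infty$, so no separate finite-sublevel assumption is needed. If a sublevel were infinite, then by \Cref{thm:target-uniform-equivalences} its domain complexity would be $\infty$, and no finite $B$ could bound it. The hypothesis of the proposition is therefore vacuous-free: the contradiction covers that case too.
  \item We must be careful with how the hierarchy is computable. The procedure needs the finite set $\cX_M$ itself, not merely an enumeration of its elements. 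I will read ``its finite levels can be listed'' as the map $n\mapsto\cX_n$, returning the explicit finite set, being computable.
\end{itemize}
Singletons $\cH_{\leq s}^{\Pi,\kappa}$ need no special care, since then $p$ and $q$ denote the same hypothesis.
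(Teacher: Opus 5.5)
Your proposal is correct and is essentially the paper's own proof. Assuming a computable $B$, you compute $s=\max\{\kappa(p),\kappa(q)\}$, list $\cX_{B(s)}$, and compare $h_p$ and $h_q$ there. This decides semantic equivalence, because $\cX_{B(s)}$ separates every distinct pair in $\cH_{\leq s}^{\Pi,\kappa}$.
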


For classes such as linear CFGs, this recovers the absence of a computable non-asymptotic bound along $\bar{\Sigma}$ shown by \citet{chen2025nonasymptotic}, and holds for any computable exhaustive hierarchy. The computability requirement on the hierarchy is essential: without it, the hierarchy itself can encode the finite witnesses that separate the relevant hypotheses.

\begin{proposition}[Necessity of computability]
\label{prop:noncomputable-hierarchy-sublevel-bound}
Let $\Omega$ be countably infinite, and let $\cH=\{h_1,h_2,\ldots\}\subseteq\{0,1\}^{\Omega}$ be countable. There exists an exhaustive hierarchy $\bar{\cX}$ such that $N^{\bar{\cX}}(\{h_1,\ldots,h_m\})\leq m$ for all $m\geq1$.
\end{proposition}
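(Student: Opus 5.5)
The plan is to build the hierarchy level by level, placing at each level a finite set of separating witnesses for the newly added hypothesis, together with one fresh enumeration point so the hierarchy exhausts $\Omega$. Enumerate $\Omega=\{\omega_1,\omega_2,\ldots\}$. The statement is about distinct functions: $N^{\bar{\cX}}(\cdot)$ only asks that distinct members of the set are separated. So duplicates $h_i=h_j$ in the listing cause no difficulty, and we only need witnesses for pairs $h_i\neq h_j$.

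\emph{Construction.} For each $m\geq 1$ and each $j<m$ with $h_j\neq h_m$, choose a point $x_{j,m}\in\Omega$ with $h_j(x_{j,m})\neq h_m(x_{j,m})$. Let $W_m$ be the finite set of these witnesses, with $W_1=\emptyset$. Set $\cX_0=\emptyset$ and define recursively
\[
  \cX_m := \cX_{m-1}\cup W_m\cup\{\omega_{k_m}\},
\]
where $\omega_{k_m}$ is the least-index element of $\Omega\setminus(\cX_{m-1}\cup W_m)$. This element exists because $\Omega$ is infinite and the set being removed is finite.

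\emph{Verification.} Each $\cX_m$ is a finite union of finite sets, hence finite. The inclusion $\cX_{m-1}\subsetneq\cX_m$ is strict, since $\omega_{k_m}\notin\cX_{m-1}$.

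For exhaustion, we argue that $\omega_1,\ldots,\omega_m\in\cX_m$, by induction on $m$. Suppose $\omega_1,\ldots,\omega_{m-1}\in\cX_{m-1}$. Then either $\omega_m$ already lies in $\cX_{m-1}\cup W_m$, or it is the least-index missing element and is therefore added at step $m$. Hence $\bigcup_m\cX_m=\Omega$.

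For the separation bound, let $h\neq h'$ be in $\{h_1,\ldots,h_m\}$. Write $h=h_i$ and $h'=h_j$ with $j<i\leq m$, choosing any indices realizing these functions. Then $x_{j,i}\in W_i\subseteq\cX_i\subseteq\cX_m$ separates them. So every distinct pair is separated within $\cX_m$, which gives $N^{\bar{\cX}}(\{h_1,\ldots,h_m\})\leq m$ by \Cref{def:class-domain-complexity-main}.

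\emph{Main obstacle.} The only delicate points are bookkeeping. First, the definition of a hierarchy demands strict growth and exhaustion, which we secure by the fresh point $\omega_{k_m}$. Second, repeated hypotheses in the enumeration must not be required to have witnesses, which we handle by taking witnesses only for pairs with $h_j\neq h_m$. No computability is used: choosing the witnesses $x_{j,m}$ is exactly the non-effective step that \Cref{prop:computability-obstruction} shows cannot be made computable in general.
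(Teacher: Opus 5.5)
Your proof is correct and uses essentially the same construction as the paper: at level $m$ you add witnesses separating $h_m$ from each earlier $h_j$, plus the least-index unused enumeration point, which forces strict growth and exhaustion. Your version is slightly more careful than the paper's on two points: you require witnesses only for pairs with $h_j\neq h_m$, so repeated entries in the enumeration are harmless, and you spell out the induction showing $\omega_1,\ldots,\omega_m\in\cX_m$.
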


Thus, for a complexity scale $c$ with finite sublevels, if the enumeration is ordered so that $\cH_{\leq s}^{c}\subseteq\{h_1,\ldots,h_{b(s)}\}$ for some computable $b$, the same hierarchy gives $N_{(\cH,c)}^{\bar{\cX}}(s)\leq b(s)$ for all $s$. Hence the computability assumption in \Cref{prop:computability-obstruction} is essential.

Together, these observations separate two sources of difficulty. The computability obstruction above shows that non-computable rates can arise from the effective description of $\cH$ itself, provided the hierarchy is computable. The next section concerns the other source: even with $(\cH,c)$ fixed, the growth of the sublevel domain complexity need not be determined by this pair alone.

%% file: sections/no_free_lunch.tex
\section{No Free Lunch for Domain Generalization}
\label{sec:no-free-lunch-domain-generalization}

The preceding sections give meaningful quantitative notions once a hierarchy is fixed. For a target, a learner has an identification level $N_A^{\bar{\cX}}(f)$; for a finite class, $N^{\bar{\cX}}(\cH)$ is the optimal target-uniform bound; and for a complexity scale $c$, $N_{(\cH,c)}^{\bar{\cX}}(s)$ is the corresponding optimal bound on the sublevel class $\cH_{\leq s}^{c}$. All three are defined relative to the hierarchy $\bar{\cX}$.

This dependence is essential. \Cref{thm:A3-A4-characterization} already shows that, for classes with $|\cH|>2$, domain generalization uniformly over all hierarchies is possible only in the finite-domain case. We now formalize the same dependence as a no-free-lunch result: over an infinite domain, changing the hierarchy alone can make each of the fixed-hierarchy quantities above arbitrarily large.

\paragraph{Class-Level Bounds.}
We first ask this question before adding a complexity scale.

\begin{theorem}[No free lunch for domain generalization]
\label{thm:domain-generalization-nfl}
Let $\Omega$ be countably infinite. For any $\cH\subseteq\{0,1\}^{\Omega}$ with $|\cH|>2$ and every $M\in\mathbb{N}_{\geq 1}$, there exists a hierarchy $\bar{\cX}$, even one revealing a single new point at each level, such that, for every learner $A$, there exists a target $h\in\cH$ with
\begin{equation}
  N_A^{\bar{\cX}}(h)>M.
\end{equation}
When $\cH$ is finite, this is equivalent to the class-domain-complexity statement that, for every $M\in\mathbb{N}_{\geq 1}$, there exists a hierarchy $\bar{\cX}$ such that $N^{\bar{\cX}}(\cH)>M$.
\end{theorem}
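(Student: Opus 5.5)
The plan is to reuse the adversarial argument from the discussion after \Cref{thm:A3-A4-characterization}. I will pick two distinct hypotheses with an infinite agreement set, and build a singleton-increment hierarchy whose first $M$ points all lie in that agreement set. Then, at every level $n\leq M$, the learner sees identical inputs under the two targets and cannot be correct for both.

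\textbf{Step 1: choosing the pair.} Since $\Omega$ is infinite and $|\cH|>2$, pick three distinct $h_1,h_2,h_3\in\cH$. As argued after \Cref{thm:A3-A4-characterization}, at every $x$ two of the three binary labels coincide, so
\begin{equation*}
\{h_1=h_2\}\cup\{h_1=h_3\}\cup\{h_2=h_3\}=\Omega.
\end{equation*}
Hence at least one of these sets, say $\{f=g\}$ with $f\neq g$, is infinite.

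\textbf{Step 2: building the hierarchy.} Choose distinct points $x_1,\dots,x_M\in\{f=g\}$. Enumerate the remaining points of $\Omega$ (countably infinite) as $x_{M+1},x_{M+2},\dots$. Set $\cX_n=\{x_1,\dots,x_n\}$. This is a hierarchy with singleton increments: the inclusions are strict, every level is finite, and the union is $\Omega$.

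\textbf{Step 3: the indistinguishability argument.} Since $\cX_M\subseteq\{f=g\}$, the labeled sets coincide, $f|_{\cX_M}=g|_{\cX_M}$. So for any learner $A$, the output $A(f|_{\cX_M})=A(g|_{\cX_M})$ is a single function, which cannot equal both $f$ and $g$. If it differs from $f$, then $n=M$ violates the condition $\forall n\geq i,\ A(f|_{\cX_n})=f$ for every $i\leq M$, so $N_A^{\bar{\cX}}(f)>M$. Otherwise $N_A^{\bar{\cX}}(g)>M$. Note that the hierarchy is chosen before $A$, and only the target depends on $A$, which matches the quantifier order in the statement.

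\textbf{Step 4: the equivalence for finite $\cH$.} By \Cref{prop:class-domain-complexity-optimal}, $N^{\bar{\cX}}(\cH)=\min_A\sup_{h}N_A^{\bar{\cX}}(h)$, and the minimum is attained. The claim that every learner has some target with level $>M$ says exactly that this min-sup exceeds $M$; here finiteness turns the sup into a max, so "some target exceeds $M$" matches "sup $>M$". It can also be read off directly: $f\neq g$ agree on $\cX_M$, so $N^{\bar{\cX}}(\cH)>M$. Conversely, $N^{\bar{\cX}}(\cH)>M$ gives the learner statement via the optimality identity.

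The main obstacle is only Step 1, guaranteeing an infinite agreement set for some pair. This is handled by the pigeonhole covering identity together with $|\Omega|=\infty$. Everything after that is bookkeeping of the quantifiers.
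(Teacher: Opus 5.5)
Your proposal is correct and follows the paper's proof essentially step for step: the three-hypothesis covering argument gives a pair with an infinite agreement set, a point-by-point enumeration puts its first $M$ points in that set, indistinguishability at level $M$ defeats every learner, and \Cref{prop:class-domain-complexity-optimal} gives the finite-class reformulation. One small aside: finiteness of $\cH$ is not what makes ``some target exceeds $M$'' equivalent to ``$\sup>M$''. That equivalence holds for any class, since the values lie in $\mathbb{N}_{\geq 1}\cup\{\infty\}$.
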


Equivalently, the theorem can be read as a train/test no-free-lunch statement. Fix any hypothesis class $\cH$ with $|\cH|>2$, no matter how small, and fix any desired training size $M$, no matter how large. There exists a partition $\Omega=\cX_{\mathrm{tr}}\sqcup\cX_{\mathrm{te}}$ with $|\cX_{\mathrm{tr}}|=M$ such that no learner succeeds uniformly for every target. This does not contradict the fixed-hierarchy characterizations: countability gives target-wise existence, and finiteness gives target-uniform existence, after $\bar{\cX}$ is fixed. The theorem says that changing $\bar{\cX}$ can push the required level past any prescribed $M$.

\paragraph{Complexity-Stratified Bounds.}
The same question can be asked after stratifying the class by complexity. Once $\bar{\cX}$ is fixed, finite sublevel classes give well-defined optimal target-uniform bounds $N_{(\cH,c)}^{\bar{\cX}}(s)$. Can the growth of this function be controlled by $(\cH,c)$ alone?

\begin{corollary}[Complexity-stratified no free lunch]
\label{cor:complexity-stratified-nfl}
Let $\Omega$ be countably infinite, let $\cH\subseteq\{0,1\}^{\Omega}$ be countably infinite, and let $c:\cH\to\mathbb{N}$ have finite sublevel classes. For every nondecreasing $B:\mathbb{N}\to\mathbb{N}_{\geq 1}$ with $B(s)\to\infty$, there exist a hierarchy $\bar{\cX}$ on $\Omega$ and $s_0\in\mathbb{N}$ such that, for every $s\geq s_0$,
\begin{equation}
  N_{(\cH,c)}^{\bar{\cX}}(s)>B(s).
\end{equation}
\end{corollary}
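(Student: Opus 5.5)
The plan is to build the hierarchy $\bar{\cX}$ in stages. At each stage a pair of distinct low-complexity hypotheses agreeing on a long prefix serves as the witness for many sublevels $s$ at once. The first step is a reformulation from \Cref{def:class-domain-complexity-main} and \Cref{def:fixed-hierarchy-sublevel-domain-complexity}: $N_{(\cH,c)}^{\bar{\cX}}(s)>B(s)$ holds as soon as two distinct $f,g\in\cH_{\leq s}^{c}$ satisfy $f|_{\cX_{B(s)}}=g|_{\cX_{B(s)}}$. Since the sublevels are nested, $s\mapsto N_{(\cH,c)}^{\bar{\cX}}(s)$ is nondecreasing. Hence if $s_0<s_1<\cdots$ and the pair chosen at stage $k$ lies in $\cH_{\leq s_k}^{c}$ and agrees on $\cX_{B(s_{k+1})}$, then for $s\in[s_k,s_{k+1}]$ we get $N(s)\geq N(s_k)>B(s_{k+1})\geq B(s)$, using that $B$ is nondecreasing.

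The second ingredient is the combinatorial fact behind \Cref{thm:domain-generalization-nfl} and \Cref{thm:A3-A4-characterization}. Among any three distinct binary functions on an infinite domain, some two agree on an infinite set. Combined with pigeonhole on patterns: for a finite $X\subseteq\Omega$, any $2^{|X|+1}+1$ hypotheses contain three with the same pattern on $X$. So some pair agrees on $X$ and also on an infinite set $E\subseteq\Omega\setminus X$. Because $\cH$ is countably infinite and the sublevels are finite and exhaustive, $|\cH_{\leq s}^{c}|\to\infty$. So for each finite $X$ there is a least $r(X)$ such that $\cH_{\leq r(X)}^{c}$ contains such a pair.

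The construction then runs as follows. Enumerate $\Omega=\{\omega_1,\omega_2,\ldots\}$. At stage $k$ the current prefix $\cX_{n_k}$ contains $\omega_1,\ldots,\omega_k$. Set $s_k:=\max\{r(\cX_{n_k}),s_{k-1}+1\}$ and pick the pair $(f_k,g_k)$ with its infinite agreement set $E_k$. Extend the hierarchy by adding single fresh points of $E_k$, so that $f_k,g_k$ keep agreeing, up to some level $L_k$. Then add the least-indexed uncovered $\omega_j$ to reach $n_{k+1}=L_k+1$. This guarantees exhaustion, gives singleton increments, and yields $N(s)>B(s)$ for $s\in[s_k,s_{k+1}]$ provided $L_k\geq B(s_{k+1})$.

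The main obstacle is exactly this last inequality. The quantity $s_{k+1}=r(\cX_{L_k+1})$ is determined only after $L_k$ is chosen, and it may grow with $L_k$, so there is a circularity. My first attempt would be to use the slack in choosing which agreement points to add. Rather than committing to one pair, I would refine an infinite subfamily $G_k\subseteq\cH$ whose members all share a pattern on the current prefix. At each new level I add a point on which infinitely many members of $G_k$ still agree, refining $G_k$ by pigeonhole, in the spirit of a Ramsey/König argument. The next pair is then taken from the two lowest-complexity members of the surviving family. The goal is that these two members, and hence the complexity $s_{k+1}$, stabilize before $B(s_{k+1})$ levels have elapsed; this is where the monotonicity of $B$ and the freedom in choosing points would have to be exploited. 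If stabilization cannot be forced directly, the fallback would be to weaken the per-stage goal. One would witness each $s$ by any pair of complexity at most $s$ agreeing on $\cX_{B(s)}$, reusing earlier low-complexity pairs for as long as they still agree on the prefix, and postponing the inclusion of points that separate them.
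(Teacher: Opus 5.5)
Your reduction in the first paragraph is correct: nestedness of the sublevels plus monotonicity of $B$ means that a pair in $\cH_{\leq s_k}^{c}$ agreeing on $\cX_{B(s_{k+1})}$ covers every $s\in[s_k,s_{k+1}]$. The three-function/pigeonhole ingredient is also the right one. But the proposal stops at the step that carries the proof. You identify the circularity and do not resolve it: $L_k\geq B(s_{k+1})$ must hold, yet $s_{k+1}=r(\cX_{L_k+1})$ is determined only after the padding and can grow with it.

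The refinement attempt does not resolve it either. Refining an infinite family by pigeonhole at every new point can discard its two lowest-complexity members at every step. So nothing forces the complexity of the eventual next pair to stabilize before $B$ of that complexity levels have elapsed. The fallback is not a construction. As written, the plan does not establish the inequality.

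The missing idea is to commit to the next pair \emph{before} padding, and to choose it so that its agreement set overlaps the current pair's agreement set infinitely often. Apply your three-function argument inside $R:=\Eq(f_k,g_k)$ rather than on all of $\Omega$:
\begin{itemize}
\item From the infinite set $\cH\setminus\cH_{\leq s_k}^{c}$, pick three hypotheses with a common pattern on $\cX_{n_k}\cup\{\omega\}$. Here $\omega$ is the next enumeration point you must eventually include.
\item Their three pairwise agreement sets cover $R$. So some pair $(f_{k+1},g_{k+1})$ agrees on $\cX_{n_k}\cup\{\omega\}$ and has $\Eq(f_{k+1},g_{k+1})\cap\Eq(f_k,g_k)$ infinite.
\item Now $s_{k+1}:=\max\{c(f_{k+1}),c(g_{k+1})\}>s_k$ is known in advance.
\item Pad with fresh points of $\Eq(f_k,g_k)\cap\Eq(f_{k+1},g_{k+1})$ until at least $B(s_{k+1})$ positions are filled. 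Only then append $\omega$, which need not lie in $\Eq(f_k,g_k)$.
\end{itemize}
The old pair then agrees on $\cX_{B(s_{k+1})}$, which gives the bound for $s\in[s_k,s_{k+1}]$. The new pair agrees on the entire new prefix, including $\omega$, which is the invariant needed to repeat the step. This is how the paper's proof removes the circularity.
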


The key point is quantitative: with $(\cH,c)$ fixed, changing the domain hierarchy can make the sublevel domain-complexity function grow faster than any prescribed unbounded rate $B$. In particular, for some hierarchy, this function has no computable upper bound.

For length generalization, the hierarchy is the length hierarchy $\bar{\Sigma}$ from \Cref{def:length-hierarchy}. A bound for a Transformer-realizable class under this hierarchy is therefore a statement about training on all strings up to a given length and testing on longer strings. It does not automatically imply the corresponding bound for the same Transformer architecture and the same precision scale under another hierarchy on $\Sigma^+$, for example one ordered by reasoning-trajectory length, recursion depth, theorem difficulty, or planning horizon. The corollary says that such a replacement can make the bound arbitrarily large, even noncomputably so.

\paragraph{Comparison with Classical NFL.}
Classical learning-theoretic no-free-lunch theorems are usually read as warnings about hypothesis classes~\citep{shalev2014understanding}. If the learner is asked to compete with all possible labelings, there is no structure to exploit and no uniform generalization theorem to be had. Our result moves the warning to a second object that is often kept offstage: the structure connecting what is in domain to what is out of domain. Fixing the hypothesis class is therefore not enough. A domain-generalization statement also needs to say what makes the observed part informative about the unobserved part. Counting hypotheses alone is therefore not enough.

Therefore, domain-generalization guarantees cannot be characterized by the concept class alone. They must be properties of the pair $(\cH,\bar{\cX})$: the hypotheses together with the hierarchy that determines how observed regions are related to unobserved ones.

%% file: sections/extensions.tex
\section{Extended Discussions}
\label{sec:extended-discussions}

We consider four extensions: online mistake bounds, approximate prediction, distributional support mismatch, and positive-only presentations. 

\subsection{Online Learning}
\label{sec:online-learning}

Online learning keeps the same hierarchy, but replaces the identifying level by a mistake count. Before level $n$ is revealed, the prediction rule has observed the labels on $\cX_{n-1}$ and must predict the labels on the new increment $\Delta_n=\cX_n\setminus\cX_{n-1}$. We count one mistake at level $n$ if any of these predictions is wrong.

This is weaker than identification. It counts errors only when they are exposed on a newly revealed increment; identification asks whether the labels already observed determine the target on all of $\Omega$.

As in \Cref{def:main-learner}, the online prediction rule may output an arbitrary binary predictor in $\{0,1\}^{\Omega}$, not necessarily an element of $\cH$.

\begin{definition}[Uniform mistake bound]
\label{def:uniform-mistake-bound}
For an online prediction rule $A$, target $f \in \cH$, and hierarchy $\bar{\cX}$, define the \emph{mistake count}
\begin{equation}
  M_A(f, \bar{\cX})
  :=
  \sum_{n \geq 1}
  \mathbf{1}\!\left[
    A(f|_{\cX_{n-1}})|_{\cX_n\setminus\cX_{n-1}}
    \not\equiv
    f|_{\cX_n\setminus\cX_{n-1}}
  \right],
\end{equation}
where $\cX_0=\emptyset$. We say $\cH$ admits a \emph{uniform mistake bound} if there exist an online prediction rule $A$ and a finite $M \in \mathbb{N}$ such that $M_A(f,\bar{\cX}) \leq M$ for every $f\in\cH$ and every hierarchy $\bar{\cX}$.
\end{definition}

Let $\Ldim(\cH)$ denote the Littlestone dimension of $\cH$, namely the supremum of the depths of complete binary trees shattered by $\cH$. The classical mistake-bound theorem of \citet{littlestone1988learning} applies to the criterion above.

\begin{theorem}[Littlestone's mistake-bound theorem]
\label{thm:Ldim-mistake-bound}
$\cH$ admits a uniform mistake bound if and only if $\Ldim(\cH)<\infty$. Moreover, the optimal mistake bound is exactly $\Ldim(\cH)$.
\end{theorem}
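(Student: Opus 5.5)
The plan is to adapt Littlestone's classical argument to the hierarchical protocol. Two features of \Cref{def:uniform-mistake-bound} need care. First, a whole increment $\Delta_n$ is predicted at once from $f|_{\cX_{n-1}}$. Second, at most one mistake is charged per level. I would prove the upper bound with a batch version of the Standard Optimal Algorithm (SOA). I would prove the lower bound by an adversarial singleton hierarchy read off a shattered tree. Together these give both the equivalence and the exact value $\Ldim(\cH)$. Throughout I use the convention $\Ldim(\emptyset)=-1$.

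\emph{Upper bound.} Given observed labels $f|_{\cX}$, let $V:=\{h\in\cH:h|_{\cX}=f|_{\cX}\}$ be the version space. For each $x\in\Omega$ and $y\in\{0,1\}$, write $V_{x,y}:=\{h\in V:h(x)=y\}$. The rule $A(f|_{\cX})$ predicts at each point $x$ a label $y$ maximizing $\Ldim(V_{x,y})$, with ties broken arbitrarily. This depends only on the labeled set, so it is a legitimate learner in the sense of \Cref{def:main-learner}.

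The key step is the standard inequality. Let $d=\Ldim(V)<\infty$. Then for every $x$, at most one of $V_{x,0},V_{x,1}$ has $\Ldim$ equal to $d$; otherwise, rooting a tree at $x$ would give $\Ldim(V)\geq d+1$. Hence the label not chosen by SOA has $\Ldim(V_{x,y})<d$.

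Now suppose a mistake is charged at level $n$. Then some $x\in\Delta_n$ has $f(x)$ different from the SOA prediction. The version space after level $n$ is contained in $V_{x,f(x)}$, so its Littlestone dimension is at most $\Ldim(V)-1$. Other wrong points in the same increment only shrink the version space further, so a level with any mistake lowers the dimension by at least one. The version space never grows. It is also never empty, since it always contains $f$ (realizability), so its dimension stays $\geq 0$. Therefore $M_A(f,\bar{\cX})\leq\Ldim(\cH)$ for every $f$ and every hierarchy.

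\emph{Lower bound.} Let $T$ be a complete binary tree of depth $d$ shattered by $\cH$, and fix any prediction rule $A$. I would build a singleton hierarchy adaptively, playing the adversary against the deterministic rule $A$:
\begin{enumerate}[label=(\arabic*),itemsep=0.15ex,topsep=0.35ex]
  \item Set $x_1$ to be the root of $T$.
  \item At step $k\leq d$, assign $x_k$ the label opposite to $A(\{(x_i,y_i)\}_{i<k})(x_k)$.
  \item Let $x_{k+1}$ be the child of $x_k$ in direction $y_k$.
\end{enumerate}
Shattering gives some $f\in\cH$ realizing the path $(x_i,y_i)_{i\leq d}$. The points along a root-to-leaf path are distinct: a repeated point would already have its label fixed, so both of its children could not be realized. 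After step $d$, I continue the hierarchy by listing the remaining points of $\Omega$ one at a time, which makes it exhaustive. Along this hierarchy the rule makes $d$ mistakes on target $f$. Hence the optimal uniform bound is at least $d$.

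If $\Ldim(\cH)=\infty$, this construction gives arbitrarily many mistakes, so no uniform bound exists. Combined with the upper bound, this yields the equivalence and shows the optimal bound is exactly $\Ldim(\cH)$.

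I expect the main obstacle to be the batch step in the upper bound. One must check that charging one mistake per level is compatible with potential-function decrements, and that simultaneous predictions made from $f|_{\cX_{n-1}}$ still guarantee a strict drop whenever any point of $\Delta_n$ is mispredicted. The argument above handles this because monotonicity of $\Ldim$ under taking subsets lets a single wrong point account for the whole level.
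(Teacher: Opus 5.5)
Your proposal is correct and follows the paper's proof essentially step for step. The upper bound is the same batch SOA argument: a single mispredicted point $x\in\Delta_n$ forces the post-level version space into $V_{x,f(x)}$, whose Littlestone dimension is strictly below $\Ldim(V)$. The lower bound is the same adversarial singleton hierarchy read off a shattered tree, with the same distinctness argument for path points and an arbitrary exhaustive completion.
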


Thus Littlestone dimension controls the number of mistaken revealed increments. It does not control the level at which the target has been identified.

\paragraph{From Mistakes to Identification.}
The mistake-bound theorem is uniform over both targets and hierarchies, just as \Afour\ is. The difference is what the bound promises: \Afour\ gives a level after which the target has been identified, while a mistake bound only limits how many revealed increments can be predicted incorrectly. The next proposition places this online criterion between the finite and countable cases characterized in \Cref{sec:identification}.

\begin{proposition}[Intermediate mistake complexity]
\label{prop:mistake-intermediate}
Assume $\Omega$ is countable. Then
\begin{equation}
  |\cH|<\infty
  \quad\Longrightarrow\quad
  \Ldim(\cH)<\infty
  \quad\Longrightarrow\quad
  \cH \text{ is countable}.
\end{equation}
\end{proposition}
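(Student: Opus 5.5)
The statement has two implications, and I would prove them separately.

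\emph{First implication: $|\cH|<\infty \Rightarrow \Ldim(\cH)<\infty$.} I would use the standard counting bound $\Ldim(\cH)\leq\lfloor\log_2|\cH|\rfloor$. A complete binary tree of depth $d$ shattered by $\cH$ has $2^d$ root-to-leaf label patterns. Each pattern must be realized by a distinct hypothesis, because two patterns diverge at some internal node. There the two realizing hypotheses disagree on the node's instance. Hence $2^d\leq|\cH|$. Alternatively, one can invoke \Cref{thm:Ldim-mistake-bound}: the halving algorithm makes at most $\log_2|\cH|$ mistakes, since each mistaken increment halves the version space. Either route gives the claim.

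\emph{Second implication: $\Ldim(\cH)<\infty \Rightarrow \cH$ countable.} This is the substantive step. I would use the Standard Optimal Algorithm as an encoding device. Fix an enumeration $x_1,x_2,\ldots$ of the countable $\Omega$ and the singleton hierarchy $\cX_n=\{x_1,\ldots,x_n\}$. Let $d=\Ldim(\cH)$. By \Cref{thm:Ldim-mistake-bound}, the SOA (predict the label whose restricted class has larger Littlestone dimension) makes at most $d$ mistakes on every $f\in\cH$ along this hierarchy. The SOA is deterministic given the observed labels. Hence the full label sequence $f(x_1),f(x_2),\ldots$ is determined by the finite set of indices at which the SOA errs: at each step one outputs the SOA prediction, flipped exactly at the recorded indices. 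This gives an injective map from $\cH$ into the finite subsets of $\mathbb{N}$ of size at most $d$, which form a countable set. Therefore $\cH$ is countable, and in fact $|\cH|$ is bounded by the number of such subsets.

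\emph{Main obstacle.} The point needing care is that the reconstruction really is injective. I would argue by induction on $n$. If $f$ and $g$ have the same mistake set, then they agree on $\cX_{n-1}$, so the SOA sees identical inputs and makes the same prediction at $x_n$. Both then either agree or disagree with that prediction, as dictated by the shared mistake set, so $f(x_n)=g(x_n)$. Since the $\cX_n$ exhaust $\Omega$, it follows that $f=g$.

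I would also remark that both implications are strict. Thresholds on $\mathbb{N}$ are countable with infinite Littlestone dimension, and the singleton class from \Cref{thm:a1-without-collapse} is infinite with Littlestone dimension $1$. This explains the word ``intermediate''.
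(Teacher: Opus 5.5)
Your proof is correct and follows the paper's argument essentially step for step: a shattered depth-$d$ tree forces $2^d\leq|\cH|$ distinct hypotheses, and the bounded-mistake learner from \Cref{thm:Ldim-mistake-bound}, run on a fixed singleton enumeration of $\Omega$, encodes each target injectively by its at most $d$ mistake times, recovered by replay. Your strictness examples are also correct, though not needed for the statement: thresholds on $\mathbb{N}$ have infinite Littlestone dimension, and the all-zero-plus-singletons class has $\Ldim(\cH)=1$.
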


Thus, over a countable domain, finite Littlestone dimension sits between finiteness and countability. The same counting argument extends to any infinite domain: if $\Ldim(\cH)=d<\infty$, then $|\cH|\leq|\Omega|$. Fix a well-order of $\Omega$ and run a bounded-mistake learner; each target is determined by its at-most-$d$ mistake locations. Since an infinite $\Omega$ has only $|\Omega|$ subsets of size at most $d$, the bound follows.

Combining \Cref{prop:mistake-intermediate} with \Cref{thm:A2-finite,thm:Ldim-mistake-bound,thm:A1-countable} gives
\begin{equation}
  \Atwo\quad\Longrightarrow\quad \text{uniform mistake bound}\quad\Longrightarrow\quad \Aone.
\end{equation}
Thus online learning is weaker than target-uniform identification, even when the same mistake bound must work for every hierarchy. A mistake bound controls how many times disagreement can be revealed as error, but not how long the hierarchy can postpone the first point that distinguishes the remaining possible targets.

\begin{example}[One mistake does not imply target-uniform identification]
\label{ex:singleton-ldim-one}
Let $\Omega=\{x_1,x_2,\ldots\}$ and consider the class of singleton indicators
\begin{equation}
  \cH_{\mathrm{sing}}
  :=
  \{h_i : i\geq 1\},
  \qquad
  h_i(x_j) = \mathbf{1}[i=j].
\end{equation}
Then $\Ldim(\cH_{\mathrm{sing}})=1$, but $\cH_{\mathrm{sing}}$ does not satisfy \Atwo.
\end{example}

For this class, online prediction is easy: predict zero until the unique positive point appears. At most one mistake is possible. Identification is different. Along the natural hierarchy $\cX_N=\{x_1,\ldots,x_N\}$, all targets $h_i$ with $i>N$ have the same all-zero labels at level $N$. Thus no common finite level can force the learner to know which singleton target is correct.

\subsection{Approximation}
\label{sec:approximation}

A second comparison keeps the hierarchy but weakens exact identification to small $P$-error. So far, success meant exact identification: after some level, the learner must recover the target on all of $\Omega$. Here we only ask for small error under a fixed evaluation distribution. The question is whether the hierarchy-uniform impossibility from exact identification persists under this weaker requirement.

Fix a distribution $P$ on $\Omega$. For a target $f\in\cH$ and predictor $h\in\{0,1\}^{\Omega}$, the $P$-error of $h$ relative to $f$ is
\begin{equation}
  L_{P,f}(h):=P(\{x\in\Omega:h(x)\neq f(x)\}).
\end{equation}

\begin{definition}[$\varepsilon$-approximate domain complexity]
\label{def:eps-approx-domain-complexity}
For $\varepsilon>0$, learner $A$, hierarchy $\bar{\cX}$, evaluation distribution $P$, and target $f\in\cH$, define
\begin{equation}
  N_{A,\varepsilon}^{\bar{\cX},P}(f)
  :=
  \min\{i\in\mathbb{N}_{\geq 1}:\forall n\geq i,\; L_{P,f}(A(f|_{\cX_n}))\leq\varepsilon\},
\end{equation}
with value $\infty$ if no such finite $i$ exists. This is the first level after which every later predictor has $P$-error at most $\varepsilon$.
\end{definition}

For a fixed hierarchy, approximation has a simple tail-mass guarantee: every consistent learner satisfies
\begin{equation}
  L_{P,f}(A(f|_{\cX_n}))\leq P(\Omega\setminus\cX_n).
\end{equation}
Because the hierarchy is exhaustive, $P(\Omega\setminus\cX_n)\to0$. Thus fixed-hierarchy approximation does not require any combinatorial structure of $\cH$. The nontrivial question is whether one can choose a level bound before the hierarchy is known. Say that $P$ has full support if $P(x)>0$ for every $x\in\Omega$. Then every disagreement between distinct hypotheses has positive $P$-mass: if $u\neq v$, then $P(\{u\neq v\})>0$.

\begin{proposition}[Approximate no free lunch over hierarchies]
\label{prop:approximate-hierarchy-nfl}
Let $\Omega$ be countably infinite, let $P$ have full support on $\Omega$, and let $\cH\subseteq\{0,1\}^{\Omega}$ have $|\cH|>2$. Then there exists $\varepsilon>0$ such that, for every $M\in\mathbb{N}_{\geq 1}$, there is a hierarchy $\bar{\cX}$ for which every learner $A$ has some target $f\in\cH$ satisfying
\begin{equation}
  N_{A,\varepsilon}^{\bar{\cX},P}(f)>M.
\end{equation}
\end{proposition}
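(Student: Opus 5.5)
The plan is to adapt the adversarial argument from the discussion after \Cref{thm:A3-A4-characterization}. Since $\Omega$ is infinite and $|\cH|>2$, pick three distinct hypotheses; as argued there, some pair $f\neq g$ among them has an infinite agreement set $E:=\{f=g\}$. Set $\varepsilon:=\tfrac13 P(\{f\neq g\})$. This is strictly positive because $P$ has full support and $f\neq g$. The key point is that $\varepsilon$ depends only on the fixed pair $(f,g)$ and on $P$, not on $M$, which matches the quantifier order in the statement.

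Given $M$, I will choose $M$ distinct points $z_1,\ldots,z_M\in E$. Then I enumerate the rest of $\Omega$ as $y_1,y_2,\ldots$ and define the singleton hierarchy $\cX_n=\{z_1,\ldots,z_n\}$ for $n\leq M$ and $\cX_{M+k}=\cX_M\cup\{y_1,\ldots,y_k\}$. This is a valid exhaustive hierarchy with strictly increasing finite levels. Because $\cX_M\subseteq E$, we have $f|_{\cX_M}=g|_{\cX_M}$ as labeled sets, so any learner $A$ receives the identical input at level $M$ under both targets. Write $h:=A(f|_{\cX_M})$.

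The main step is the triangle inequality for disagreement mass. Since $\{f\neq g\}\subseteq\{h\neq f\}\cup\{h\neq g\}$,
\begin{equation}
  L_{P,f}(h)+L_{P,g}(h)\geq P(\{f\neq g\})=3\varepsilon .
\end{equation}
So at least one of the two errors exceeds $\varepsilon$; in fact one is at least $\tfrac32\varepsilon>\varepsilon$. Call that target $t\in\{f,g\}$. Then the level $n=M$ violates the requirement in \Cref{def:eps-approx-domain-complexity} for every $i\leq M$, which gives $N_{A,\varepsilon}^{\bar{\cX},P}(t)>M$.

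The only delicate points are choosing $\varepsilon$ uniformly in $M$, which the fixed pair handles, and using a strict inequality, which the factor $\tfrac13$ secures. Beyond that I expect no real obstacle. The infinitude of the agreement set is exactly what allows $M$ to be arbitrary.
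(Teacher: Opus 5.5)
Your proposal is correct and follows essentially the same route as the paper's proof. Both take a pair $f\neq g$ with infinite agreement set, set $\varepsilon=\tfrac13 P(\{f\neq g\})$, and place $M$ agreement points at the front of the hierarchy so that $\cX_M\subseteq\{f=g\}$. Then $L_{P,f}(\hat h)+L_{P,g}(\hat h)\geq P(\{f\neq g\})$ forces one of the two targets to have error above $\varepsilon$ at level $M$.
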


Thus allowing error does not produce a hierarchy-independent rate. For a fixed hierarchy and fixed evaluation distribution, consistency gives convergence in $P$-error. But before the hierarchy is fixed, the points on which hypotheses differ can be postponed for arbitrarily many levels, even when those points have positive $P$-mass.

\subsection{Out-of-Support Generalization}
\label{sec:out-of-support}

A third variant returns to a distributional observation model. The learner is no longer given a hierarchy of observed regions; it receives i.i.d.\ labeled samples from a training distribution $Q$, returns a binary predictor, and is evaluated under a possibly different distribution $P$. The hierarchy-uniform impossibility results above do not apply literally to this model. The same issue reappears as a support question: does training ever reveal the distinctions that evaluation can test?

The support of $Q$ is the region from which labels can ever be observed. Sampling reveals points in $\operatorname{supp}(Q)$ randomly, with repetitions, but never reveals labels outside that support. We measure error by the same $P$-error as in \Cref{sec:approximation}.

\paragraph{Full-Support Generalization.}
If $Q$ has full support, meaning $Q(x)>0$ for every $x\in\Omega$, no point is permanently outside training. Every point that can affect the $P$-error is at least possible to observe. For finite classes, this gives a target-uniform sample size.

\begin{theorem}[Full-support generalization]
\label{thm:full-support-source-main}
Let $\Omega$ be countably infinite, let $Q$ have full support on $\Omega$, fix any evaluation distribution $P$ on $\Omega$, and let $\cH\subseteq\{0,1\}^{\Omega}$ be finite. Then there exists a learner $A$ such that, for every $\varepsilon,\delta>0$, there is a finite $N$ such that, for every target $f\in\cH$ and every $n\geq N$,
\begin{equation}
  \mathbb{P}_{(x_1,\ldots,x_n)\sim Q^n}
  \!\left[
    L_{P,f}\!\left(A\!\left(\big(x_i,f(x_i)\big)_{i=1}^n\right)\right)\leq\varepsilon
  \right]
  \geq
  1-\delta.
  \label{eq:full-support-source-main-bound}
\end{equation}
\end{theorem}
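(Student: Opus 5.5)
We will use the consistent learner with a fixed tie-breaking order, which is the same least-index learner as in \Cref{thm:A1-countable}. Enumerate the finite class as $\cH=\{h_1,\ldots,h_k\}$. On input a labeled sample $S=(x_i,y_i)_{i=1}^n$, $A$ returns the first $h_j$ consistent with all pairs in $S$. In the realizable setting the target is always consistent, so the output is well defined. The sample size $N$ may depend on $\varepsilon,\delta$ (and on $Q,P,\cH$), but it must not depend on $f$; finiteness of $\cH$ is what makes this possible.

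\textbf{Step 1 (reduction to a finite set of separating points).} For each ordered pair $h\neq h'$ in $\cH$ we have $h\neq h'$ as functions, so we fix a witness $x_{h,h'}\in\Omega$ with $h(x_{h,h'})\neq h'(x_{h,h'})$. Let $W$ be the finite set of all such witnesses, so $|W|\leq k^2$. Full support gives
\[
q:=\min_{x\in W}Q(x)>0 .
\]
If the sample contains every point of $W$, then any $g\in\cH$ consistent with the sample satisfies $g=f$: were $g\neq f$, the witness $x_{g,f}$ would have been observed with label $f(x_{g,f})\neq g(x_{g,f})$. Hence on this event $A$ outputs exactly $f$, and $L_{P,f}(A(\cdot))=0\leq\varepsilon$ for every evaluation distribution $P$.

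\textbf{Step 2 (coupon-collector bound).} By a union bound over $W$,
\[
\mathbb{P}_{Q^n}[\exists x\in W \text{ not sampled}]\leq |W|(1-q)^n\leq k^2e^{-qn}.
\]
We choose $N\geq q^{-1}\log(k^2/\delta)$. Then for every $n\geq N$ and every target $f$ the failure probability is at most $\delta$. This $N$ depends only on $\cH$ and $Q$, not on $f$, $P$, or even $\varepsilon$.

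We do not expect a real obstacle here. The only point needing care is that $\operatorname{supp}(Q)=\Omega$ is used only on the finite set $W$; without full support, a witness could carry zero $Q$-mass, and this is exactly the failure mode the out-of-support discussion targets. If one insisted on a bound depending on $\varepsilon$, the tail-mass argument would also work, but exact identification makes it unnecessary.
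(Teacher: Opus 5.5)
Your proof is correct and follows essentially the same route as the paper: the least-index consistent learner, the observation that seeing a separating point for every pair forces exact identification (hence zero $P$-error), and a union bound giving an $N$ that does not depend on the target. The only difference is that the paper's union bound is over the pairwise disagreement sets $D_{j,k}=\{h_j\neq h_k\}$ with $\alpha=\min_{j\neq k}Q(D_{j,k})$, whereas you fix one witness point per pair, which gives a slightly weaker rate ($q\leq\alpha$) and the same conclusion; two small gaps remain, namely the learner needs a default output on inconsistent samples, and $|\cH|=1$ (where $W=\emptyset$) should be handled separately.
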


Full-support sampling removes the permanent hiding place for any fixed disagreement set. Since a finite class has only finitely many pairwise disagreement sets, one sample size can make all of them appear with high probability.

\paragraph{Out-of-Support.}
If $Q$ is not full support, some labels are never observed. The next theorem says that this gap can break the same target-uniform conclusion even when $Q$ has infinite support: the learner may see infinitely many possible training points, but still not the part of $\Omega$ on which evaluation distinguishes two targets.

\begin{theorem}[Out-of-support generalization can fail]
\label{thm:infinite-support-source-failure}
Let $\Omega$ be countably infinite, let $P$ have full support on $\Omega$, and let $\cH\subseteq\{0,1\}^{\Omega}$ with $|\cH|\geq3$. Then there exist a distribution $Q$ with infinite support and an $\varepsilon>0$ such that, for every learner $A$, every $N\in\mathbb{N}_{\geq 1}$, and every $n\geq N$, there exists a target $f\in\cH$ satisfying
\begin{equation}
  \mathbb{P}_{(x_1,\ldots,x_n)\sim Q^n}
  \!\left[
    L_{P,f}\!\left(A\!\left(\big(x_i,f(x_i)\big)_{i=1}^n\right)\right)>\varepsilon
  \right]
  \geq \frac{1}{2}.
  \label{eq:infinite-support-main-bound}
\end{equation}
\end{theorem}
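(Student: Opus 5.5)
We start from the combinatorial fact recorded in the discussion after \Cref{thm:A3-A4-characterization}. Choose three distinct hypotheses $h_1,h_2,h_3\in\cH$. Since $\Omega$ is infinite, some pair $f\neq g$ among them agrees on an infinite set $E:=\{f=g\}$. Because $f\neq g$, the disagreement set $D:=\{f\neq g\}$ is nonempty, and since $P$ has full support, $\varepsilon_0:=P(D)>0$. We take $\varepsilon:=\varepsilon_0/3$. The aim is a distribution $Q$ supported exactly on $E$. Then samples from $Q$ never reveal any point of $D$, so the learner's input has the same law under $f$ and under $g$.

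\emph{Construction of $Q$.} Enumerate $E=\{e_1,e_2,\ldots\}$ and set $Q(e_k):=2^{-k}$. This $Q$ has infinite support, and $Q(D)=0$. For any $n$, draw $(x_1,\ldots,x_n)\sim Q^n$. Almost surely every $x_i\in E$, so $f(x_i)=g(x_i)$ for all $i$. The labeled samples $S_f:=(x_i,f(x_i))_{i=1}^n$ and $S_g:=(x_i,g(x_i))_{i=1}^n$ are therefore equal almost surely. Hence $A(S_f)=A(S_g)=:\hat h$ almost surely.

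\emph{Key step.} For any predictor $\hat h$, the triangle inequality for disagreement mass gives $L_{P,f}(\hat h)+L_{P,g}(\hat h)\geq P(\{f\neq g\})=\varepsilon_0$. Therefore, on every sample, at least one of $L_{P,f}(\hat h)>\varepsilon$ or $L_{P,g}(\hat h)>\varepsilon$ holds, since $\varepsilon_0>2\varepsilon$. Let $p_f$ and $p_g$ be the probabilities of these two events. Then $p_f+p_g\geq 1$, so $\max(p_f,p_g)\geq 1/2$. We pick the corresponding target. This works for every learner $A$ and every sample size $n$, so the quantifiers over $N$ and $n\geq N$ in \eqref{eq:infinite-support-main-bound} are satisfied trivially. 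Note that $Q$ and $\varepsilon$ depend only on $\cH$ and $P$, not on $A$ or $n$, which is the order of quantifiers the statement requires.

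\emph{Main obstacle.} The only delicate point is guaranteeing that the hidden region $D$ has positive evaluation mass while the support of $Q$ is still infinite. Full support of $P$ handles the first requirement. The three-hypothesis pigeonhole argument, which yields a pair with infinite agreement set, handles the second; with only two hypotheses the agreement set could be finite. A randomized learner would need the same argument applied conditionally on its internal randomness, but \Cref{def:main-learner} makes $A$ deterministic, so this issue does not arise.
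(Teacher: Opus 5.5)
Your proposal is correct and follows essentially the same route as the paper's proof. You take a pair with infinite agreement set from the three-hypothesis cover, put geometric weights on that set to define $Q$, use the bound $L_{P,f}(\hat h)+L_{P,g}(\hat h)\geq P(\{f\neq g\})$ on every sample, and average to get failure probability at least $1/2$ for one of the two targets; the only cosmetic difference is your choice $\varepsilon=\varepsilon_0/3$ versus the paper's $\varepsilon<\Delta/2$.
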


Thus the relevant condition is not how many points $Q$ can sample from. It is whether the training support includes the distinctions that $P$-error will test. Full support guarantees this; an infinite support that is not full need not.

\subsection{Positive-Only Presentations}
\label{sec:positive-only-presentations}

Finally, consider a nearby observation model, familiar from identification from text and generation in the limit~\citep{gold1967language,kleinberg2024languagegeneration}, in which the learner sees only positive examples. For $f\in\cH$, write
\begin{equation}
  \mathrm{Pos}(f):=\{x\in\Omega:f(x)=1\}
\end{equation}
for its positive region. A positive presentation of $f$ is a strictly increasing sequence $\emptyset=S_0\subsetneq S_1\subsetneq S_2\subsetneq\cdots$ of finite subsets of $\mathrm{Pos}(f)$ whose union is $\mathrm{Pos}(f)$. We restrict to the nontrivial case in which every $\mathrm{Pos}(f)$ is infinite.
Write $\mathrm{Pres}^{+}(f)$ for the set of all such presentations.

We ask for the same fully uniform exact-identification guarantee as before, but with positive sets replacing labeled finite domains. Here $A$ denotes a positive-only learner whose input is a finite positive set:
\begin{equation}
  \exists A\;\exists N<\infty\;\forall f\in\cH\;\forall \bar S\in\mathrm{Pres}^{+}(f):\;
  \forall t\geq N,\; A(S_t)=f.
  \label{eq:positive-only-uniform-id}
\end{equation}
The relevant condition is finite pairwise positive overlap: distinct targets cannot share arbitrarily many positive points.

\begin{proposition}[Uniform positive-only identification]
\label{prop:positive-only-uniform-identification}
Assume $\mathrm{Pos}(f)$ is infinite for every $f\in\cH$. Then the fully uniform positive-only criterion in \Cref{eq:positive-only-uniform-id} holds if and only if
\begin{equation}
  \sup_{\substack{f,g\in\cH\\ f\neq g}}|\mathrm{Pos}(f)\cap \mathrm{Pos}(g)|<\infty,
\end{equation}
with the convention that the supremum over the empty set is $0$. Moreover, if this supremum is $d$, then $N=d+1$ suffices.
\end{proposition}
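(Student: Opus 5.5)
Write $d:=\sup_{f\neq g}|\mathrm{Pos}(f)\cap\mathrm{Pos}(g)|$. The plan is to mirror the proof of \Cref{thm:A3-A4-characterization}, but now only positive points are revealed. Pairwise overlaps of positive regions will play the role that pairwise agreement sets played there.

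\emph{Sufficiency.} Suppose $d<\infty$. Define $A(S)$, for a finite set $S\subseteq\Omega$, as follows: if $|S|\geq d+1$ and there is some $h\in\cH$ with $S\subseteq\mathrm{Pos}(h)$, output that $h$; otherwise output an arbitrary fixed predictor.

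1. The learner is well defined. Two distinct $h,h'$ with $S\subseteq\mathrm{Pos}(h)\cap\mathrm{Pos}(h')$ would force $|\mathrm{Pos}(h)\cap\mathrm{Pos}(h')|\geq d+1$, contradicting the definition of $d$.
2. Let $\bar S\in\mathrm{Pres}^{+}(f)$. The chain is strictly increasing, so $|S_t|\geq t$. Hence for $t\geq d+1$ we have $|S_t|\geq d+1$.
3. Also $S_t\subseteq\mathrm{Pos}(f)$, so by step 1 the unique consistent hypothesis is $f$. Thus $A(S_t)=f$ for all $t\geq N=d+1$.
4. If $|\cH|\le 1$, the convention $d=0$ gives $N=1$, and the same argument applies.

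\emph{Necessity.} Suppose \Cref{eq:positive-only-uniform-id} holds with learner $A$ and bound $N$, but $d=\infty$.

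1. Pick distinct $f,g$ with $|\mathrm{Pos}(f)\cap\mathrm{Pos}(g)|\geq N$, and choose $T\subseteq\mathrm{Pos}(f)\cap\mathrm{Pos}(g)$ with $|T|=N$.
2. Build a presentation of $f$ whose first $N$ sets add the points of $T$ one at a time, so that $S_N=T$. Then continue by enumerating the rest of $\mathrm{Pos}(f)$ one point per step. This is possible because $\mathrm{Pos}(f)$ is countable and infinite, so the remainder $\mathrm{Pos}(f)\setminus T$ is infinite and the chain stays strictly increasing and exhausts $\mathrm{Pos}(f)$.
3. Build the analogous presentation of $g$, also with $S_N=T$.
4. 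At level $t=N$ the learner receives the same input $T$ under both targets. The criterion then demands $A(T)=f$ and $A(T)=g$ with $f\neq g$, a contradiction.

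The argument is essentially routine. The only point needing care is the construction in necessity step 2: each presentation must be a genuine strictly increasing exhaustion of the full positive region with $T$ exactly at level $N$, which the infinite-positive-region assumption ensures. In sufficiency, it matters that $|S_t|\geq t$ comes from strict growth of the chain, so that $N=d+1$ works uniformly over all presentations.
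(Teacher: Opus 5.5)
Your proof is correct and follows the paper's argument essentially step for step. Sufficiency uses a uniqueness learner, and $|S_t|\geq t$ forces identification by $N=d+1$; necessity uses two presentations sharing a common $N$-point prefix inside $\mathrm{Pos}(f)\cap\mathrm{Pos}(g)$. The only cosmetic difference is that you gate the learner on $|S|\geq d+1$ and derive uniqueness, whereas the paper's learner outputs the unique hypothesis $h$ with $S\subseteq\mathrm{Pos}(h)$ whenever one exists.
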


The condition is not finiteness. For example, take $\Omega=\mathbb{N}\times\mathbb{N}$ and, for $a,b\in\mathbb{N}$, let
\begin{equation}
  G_{a,b}:=\{(t,at+b):t\in\mathbb{N}\},
\end{equation}
and let $f_{a,b}$ be the indicator of $G_{a,b}$. The class $\{f_{a,b}:a,b\in\mathbb{N}\}$ is countably infinite, and every positive region is infinite. Yet two distinct graphs intersect in at most one point, so two positive observations already certify the target against every competitor.

In our terms, this restricts the presentation to positive examples while keeping exact identification and full uniformity. Under this restriction, the strongest uniform statement can hold even for infinite classes.

This positive-only model is closely related to generation in the limit~\citep{kleinberg2024languagegeneration}. There too, the learner sees a growing finite subset of the target language, but only needs to generate one new valid positive example. Since the target is unknown, that point must be positive for every consistent hypothesis; closure dimension characterizes when this can be done uniformly~\citep{raman2025generation}. The proposition above shows that the stronger goal of recovering the whole language, equivalently generating all positives, can also hold uniformly. 

%% file: appendix/app_a_notation.tex
\section{Auxiliary Definitions}
\label{app:notation}

The appendix uses the following derived notation.

For two hypotheses $f,g\in\{0,1\}^{\Omega}$, write
\begin{equation}
  \Eq(f,g):=\{x\in\Omega:f(x)=g(x)\},
\end{equation}
also denoted $\{f=g\}$ when this is typographically lighter.

\begin{definition}[Version space and pointwise class complexity]
Fix a hypothesis class $\cH$, a hierarchy $\bar{\cX}$, and a target $f\in\cH$. The version space at level $n$ is
\begin{equation}
  V_{\cH,n}^{\bar{\cX}}(f)
  :=
  \{g\in\cH:g|_{\cX_n}=f|_{\cX_n}\}.
\end{equation}
The pointwise algorithm-agnostic domain complexity is
\begin{align}
  N_{\cH}^{\bar{\cX}}(f)
  &:=
  \min\{n\in\mathbb{N}_{\geq 1}:V_{\cH,n}^{\bar{\cX}}(f)=\{f\}\} \\
  &=
  \min\{n\in\mathbb{N}_{\geq 1}:\forall g\in\cH\setminus\{f\},\ \exists x\in\cX_n:g(x)\neq f(x)\}.
\end{align}
If no such finite level exists, set $N_{\cH}^{\bar{\cX}}(f):=\infty$.
\end{definition}

%% file: appendix/app_b_characterization_proofs.tex
\section{Proofs for Exact Characterization}
\label{app:characterization-proofs}

This appendix proves the characterization results from \Cref{sec:characterization}. The proofs repeatedly use the same elementary observation: if two targets induce the same labeled sample, then any deterministic learner receives the same input under the two targets and must return the same hypothesis.

\subsection{Non-Uniform Domain-Dependent Generalization}
\label{app:A1-proofs}

\begin{proof}[\textbf{Proof of \Cref{thm:A1-countable}}]
If $\cH=\emptyset$, the conclusion is vacuous. Assume henceforth that $\cH$ is nonempty.
Assume first that $\cH$ is countable. Fix an enumeration
\begin{equation}
  \cH=\{h_1,h_2,\ldots\}.
\end{equation}
Define the learner $A$ as follows. On any labeled finite sample, if there is at least one hypothesis in $\cH$ consistent with that sample, output the least-index consistent hypothesis; if there is none, output $h_1$. Only the consistent case matters on samples generated by targets in $\cH$.

Fix a target $f=h_m$ and a hierarchy $\bar{\cX}$. For every $j<m$, the hypotheses $h_j$ and $f$ are distinct, so choose a point $x_j\in\Omega$ such that $h_j(x_j)\neq f(x_j)$. The finite set $\{x_j:j<m\}$ is contained in some prefix $\cX_N$ because $\bar{\cX}$ exhausts $\Omega$. For every $n\geq N$, each lower-index hypothesis $h_j$ with $j<m$ disagrees with $f$ on $\cX_n$, while $f$ itself remains consistent with $f|_{\cX_n}$. Hence the least-index consistent hypothesis is exactly $f$, so $A(f|_{\cX_n})=f$ for all $n\geq N$. Thus $N_A^{\bar{\cX}}(f)\leq N$, proving \Aone.

Conversely, suppose $\cH$ satisfies \Aone, witnessed by a learner $A$. Fix a point-by-point hierarchy $\bar{\cX}$, say $\cX_n=\{x_1,\ldots,x_n\}$. For each $f\in\cH$, choose a finite level $N_f$ such that $A(f|_{\cX_{N_f}})=f$, which is possible because $N_A^{\bar{\cX}}(f)<\infty$. Consider the map
\begin{equation}
  f\longmapsto \bigl(N_f, f|_{\cX_{N_f}}\bigr).
\end{equation}
This map is injective. Indeed, if two targets $f,g$ have the same image, then they give $A$ the same labeled sample at the same level; hence
\begin{equation}
  f=A(f|_{\cX_{N_f}})=A(g|_{\cX_{N_g}})=g.
\end{equation}
For each fixed $n$, there are only finitely many binary labelings of $\cX_n$, and there are countably many choices of $n$. Therefore the range of this injective map is countable, and so $\cH$ is countable.
\end{proof}

\begin{proof}[\textbf{Proof of \Cref{thm:a1-without-collapse}}]
Let $\Omega=\mathbb{N}$ and
\begin{equation}
  \cH=\{0\}\cup\{h_i:i\geq 1\},
  \qquad
  h_i(j)=\mathbf{1}[i=j],
\end{equation}
where $0$ is the all-zero function. Let the hierarchy be $\cX_n=\{1,\ldots,n\}$. Use the least-index consistent learner with $0$ listed first.

For the target $f=0$, every observed prefix is all zero, so $0$ is consistent at every level and is listed first. Hence $A(0|_{\cX_n})=0$ for every $n\geq 1$, and $N_A^{\bar{\cX}}(0)=1$.

However, for every finite $n$ and every $j>n$, the singleton indicator $h_j$ agrees with $0$ on $\cX_n$. Thus the version space around $0$ never collapses at any finite level, even though the learner has already identified the target.
\end{proof}

\begin{proof}[\textbf{Proof of \Cref{prop:finite-isolation-hierarchy-invariant}}]
The implication (ii)$\Rightarrow$(i) is immediate. If (i) holds, then any level $\cX_n$ with $V_{\cH,n}^{\bar{\cX}}(f)=\{f\}$ is such a finite set $S$. If (iii) holds, then every hierarchy eventually contains $S$, and at that level the version space around $f$ is $\{f\}$.

For the topological reformulation, the finite-observation cells $[h]_S$ form a basis: finite intersections of such cells are either empty or another finite-observation cell. Since every finite-level hierarchy that exhausts $\Omega$ eventually contains any fixed finite $S\subseteq\Omega$, its level version spaces generate exactly the same topology. Finally, $\{f\}$ is open in this topology if and only if some finite-observation cell $[f]_S$ is equal to $\{f\}$, which is exactly finite isolation.
\end{proof}

\begin{proof}[\textbf{Proof of \Cref{thm:A1-equivalent-conditions}}]
The implication (i)$\Rightarrow$(ii) follows by fixing any hierarchy, and (ii)$\Rightarrow$(iii) follows by allowing the hierarchy to depend on the target.

Assume (iii). Then there is a learner $A$ such that for every $f\in\cH$ there are a hierarchy $\bar{\cX}^{\,f}$ and a finite level $N_f$ with
\begin{equation}
  A(f|_{\cX^{\,f}_{N_f}})=f.
\end{equation}
Assign to $f$ one such finite labeled sample $f|_{\cX^{\,f}_{N_f}}$. This assignment is injective: if two targets are assigned the same labeled finite sample, then $A$ receives the same input and must output both targets. Since $\Omega$ is countable, the collection of finite subsets of $\Omega$ is countable, and each finite subset has finitely many binary labelings. Hence there are only countably many finite labeled samples, so $\cH$ is countable.

Finally, if $\cH$ is countable, then (i) holds by \Cref{thm:A1-countable}. Therefore all the listed conditions are equivalent.
\end{proof}

\subsection{Target-Uniform Domain-Dependent Generalization}
\label{app:A2-proofs}

\begin{proof}[\textbf{Proof of \Cref{thm:A2-finite}}]
If $\cH=\emptyset$, the conclusion is vacuous. Assume henceforth that $\cH$ is nonempty.
Assume first that $\cH$ is finite. Use the least-index consistent learner from the proof of \Cref{thm:A1-countable}, for any fixed enumeration of the finite class. Fix a hierarchy $\bar{\cX}$. By the argument in \Cref{thm:A1-countable}, every target $f\in\cH$ has a finite identifying level $N_A^{\bar{\cX}}(f)$. Since $\cH$ is finite,
\begin{equation}
  N:=\max_{f\in\cH} N_A^{\bar{\cX}}(f)
\end{equation}
is finite and works for all targets along this hierarchy. The same learner is used for every hierarchy, so \Atwo\ holds.

Conversely, suppose \Atwo\ holds. Then there are a learner $A$ and, for any fixed hierarchy $\bar{\cX}$, a finite level $N$ such that $N_A^{\bar{\cX}}(f)\leq N$ for every $f\in\cH$. If two distinct targets $f,g\in\cH$ agreed on $\cX_N$, then the learner would receive the same input at level $N$ under both targets. Since $N_A^{\bar{\cX}}(f)\leq N$ and $N_A^{\bar{\cX}}(g)\leq N$, it would have to output both $f$ and $g$ on that same input, a contradiction. Thus the restriction map
\begin{equation}
  f\longmapsto f|_{\cX_N}
\end{equation}
is injective on $\cH$. Because $\cX_N$ is finite, it has only finitely many binary labelings. Hence $\cH$ is finite.
\end{proof}

\begin{proof}[\textbf{Proof of \Cref{prop:class-domain-complexity-optimal}}]
Fix a hierarchy $\bar{\cX}$.
If $\cH=\emptyset$, the identity is vacuous. Assume henceforth that $\cH$ is nonempty.

First suppose a learner $A$ identifies every target in $\cH$ by some finite level $M$, meaning
\begin{equation}
  \sup_{f\in\cH}N_A^{\bar{\cX}}(f)\leq M.
\end{equation}
If two distinct hypotheses $f,g\in\cH$ agreed on $\cX_M$, then $A$ would receive the same labeled input at level $M$ under both targets. Since both targets must be identified by level $M$, the same output would have to equal both $f$ and $g$, impossible. Therefore $\cX_M$ distinguishes every pair in $\cH$, so
\begin{equation}
  N^{\bar{\cX}}(\cH)\leq M.
\end{equation}
Since this holds for every learner $A$ with a finite uniform identifying level, it gives
\begin{equation}
  N^{\bar{\cX}}(\cH)\leq \min_A\sup_{f\in\cH}N_A^{\bar{\cX}}(f).
\end{equation}

Conversely, suppose $N^{\bar{\cX}}(\cH)=N<\infty$. Define a learner that, on a labeled finite sample, outputs the unique hypothesis in $\cH$ consistent with that sample if such a unique hypothesis exists, and outputs an arbitrary fixed element of $\cH$ otherwise. For every target $f\in\cH$ and every $n\geq N$, the prefix $\cX_n$ contains $\cX_N$, and $\cX_N$ distinguishes every pair in $\cH$. Hence $f$ is the unique hypothesis in $\cH$ consistent with $f|_{\cX_n}$, so the learner outputs $f$. Therefore
\begin{equation}
  \min_A\sup_{f\in\cH}N_A^{\bar{\cX}}(f)\leq N^{\bar{\cX}}(\cH).
\end{equation}
If $N^{\bar{\cX}}(\cH)=\infty$, the first paragraph shows that no learner can have a finite uniform identifying level. Thus both sides are $\infty$. This proves the identity in all cases.
\end{proof}

\begin{proof}[\textbf{Proof of \Cref{thm:target-uniform-equivalences}}]
The implication (i)$\Rightarrow$(ii) is immediate by fixing one hierarchy. If (ii) holds with learner $A$, hierarchy $\bar{\cX}$, and bound $N$, then no two distinct targets can agree on $\cX_N$: otherwise $A$ would receive the same labeled input under both targets at level $N$, but would have to output two different functions. Hence $\cX_N$ distinguishes every pair in $\cH$, so $N^{\bar{\cX}}(\cH)<\infty$, proving (iii).

If (iii) holds, then for some hierarchy $\bar{\cX}$ and some finite $N$, the prefix $\cX_N$ distinguishes every pair in $\cH$. The restriction map $f\mapsto f|_{\cX_N}$ is injective, so $\cH$ injects into the finite set of binary labelings of $\cX_N$. Hence $\cH$ is finite, proving (v).

If $\cH$ is finite, then for every hierarchy $\bar{\cX}$ and every distinct pair $f,g\in\cH$, there is some point $x_{f,g}$ with $f(x_{f,g})\neq g(x_{f,g})$. Since $\bar{\cX}$ exhausts $\Omega$, each such point appears in some finite prefix. Taking the maximum over the finitely many pairs gives a finite level whose prefix distinguishes every pair in $\cH$. Thus (iv) holds. If $\cH$ has at most one element, the distinguishing condition is vacuous and any level, say $1$, works.

Finally, (iv)$\Rightarrow$(iii) is immediate, and (v)$\Rightarrow$(i) also follows from \Cref{thm:A2-finite}. Therefore all five conditions are equivalent.
\end{proof}

\subsection{Domain-Uniform Generalization}
\label{app:A3-A4-proofs}

\begin{proof}[\textbf{Proof of \Cref{thm:A3-brittle}}]
The implication (i)$\Rightarrow$(ii) is immediate, since a single bound that works for all targets and all hierarchies is in particular allowed to depend on the target.

Assume next that $\Omega$ is finite. Under the finite-domain convention, every hierarchy has revealed all of $\Omega$ by level $|\Omega|$. Consider the learner that outputs the unique hypothesis in $\cH$ consistent with the full observed labels once the full domain has been revealed, and is arbitrary before that. For any target $f\in\cH$, the full-domain labeled sample determines $f$ uniquely inside $\cH$. Hence this learner identifies every target along every hierarchy by level $|\Omega|$, proving \Afour, and therefore \Athree.

It remains to prove that \Athree\ implies $\Omega$ is finite, under the assumption $|\cH|>2$. Suppose \Athree\ is witnessed by a learner $A$. For every $f\in\cH$, choose a finite level bound $N_f$ such that
\begin{equation}
  N_A^{\bar{\cX}}(f)\leq N_f
\end{equation}
for every hierarchy $\bar{\cX}$.

We first show that every pair of distinct hypotheses has a finite agreement set. Let $f,g\in\cH$ be distinct and set $M=\max\{N_f,N_g\}$. If $\Eq(f,g)$ contained at least $M$ points, choose a hierarchy whose $M$-th prefix consists of $M$ such agreement points. At level $M$, the labeled samples generated by $f$ and by $g$ are identical. But $M\geq N_f,N_g$, so domain-uniform identifiability requires $A$ to output $f$ on this input under target $f$ and $g$ on the same input under target $g$, impossible. Therefore $\Eq(f,g)$ has fewer than $M$ points, and is finite.

Now choose three distinct hypotheses $h_1,h_2,h_3\in\cH$. Since the labels are binary, at each point $x\in\Omega$ at least two of the three values $h_1(x),h_2(x),h_3(x)$ are equal. Hence
\begin{equation}
  \Omega
  =
  \Eq(h_1,h_2)\cup \Eq(h_1,h_3)\cup \Eq(h_2,h_3).
\end{equation}
Each set on the right is finite by the previous paragraph, so $\Omega$ is finite. This proves (ii)$\Rightarrow$(iii), and completes the equivalence.
\end{proof}

%% file: appendix/app_c_complexity_stratified_proofs.tex
\section{Proofs for Stratified Domain Complexity}
\label{app:complexity-stratified-proofs}

\begin{proof}[\textbf{Proof of \Cref{thm:sublevel-domain-complexity-optimal-rate}}]
Fix a hierarchy $\bar{\cX}$, a class $\cH\subseteq\{0,1\}^{\Omega}$, and a complexity scale $c:\cH\to\mathbb{N}$.

Fix $s\in\mathbb{N}$. By \Cref{eq:finite-sublevel-exhaustion}, $\cH_{\leq s}^{c}$ is finite. For every distinct pair $f,g\in\cH_{\leq s}^{c}$, choose a point $x_{f,g}$ with $f(x_{f,g})\neq g(x_{f,g})$. Since $\bar{\cX}$ exhausts $\Omega$, each such point appears in some finite prefix. Taking the maximum over the finitely many pairs gives a finite prefix that distinguishes every pair in $\cH_{\leq s}^{c}$. If $|\cH_{\leq s}^{c}|\leq1$, any level, say $1$, works. Thus
\begin{equation}
  N_{(\cH,c)}^{\bar{\cX}}(s)<\infty.
\end{equation}

For the optimality identity, fix $s$ with $\cH_{\leq s}^{c}\neq\emptyset$. By definition,
\begin{equation}
  N_{(\cH,c)}^{\bar{\cX}}(s)
  =
  N^{\bar{\cX}}(\cH_{\leq s}^{c}).
\end{equation}
Applying \Cref{prop:class-domain-complexity-optimal} to the class $\cH_{\leq s}^{c}$ gives
\begin{equation}
  N_{(\cH,c)}^{\bar{\cX}}(s)
  =
  \min_A \sup_{f\in\cH_{\leq s}^{c}} N_A^{\bar{\cX}}(f),
\end{equation}
with both sides taking values in $\mathbb{N}_{\geq 1}\cup\{\infty\}$.

Finally suppose $|\cH_{\leq s}^{c}|\to\infty$. The sublevel classes are nested in $s$, so the numbers $N_{(\cH,c)}^{\bar{\cX}}(s)$ are nondecreasing. If they did not tend to infinity, then there would be a finite $M$ and arbitrarily large $s$ with $N_{(\cH,c)}^{\bar{\cX}}(s)\leq M$. For each such $s$, the prefix $\cX_M$ would distinguish every pair in $\cH_{\leq s}^{c}$, so the restriction map into $\{0,1\}^{\cX_M}$ would be injective on $\cH_{\leq s}^{c}$. This gives
\begin{equation}
  |\cH_{\leq s}^{c}|\leq 2^{|\cX_M|}
\end{equation}
for arbitrarily large $s$, contradicting $|\cH_{\leq s}^{c}|\to\infty$. Therefore $N_{(\cH,c)}^{\bar{\cX}}(s)\to\infty$.
\end{proof}

\begin{proof}[\textbf{Proof of \Cref{prop:computability-obstruction}}]
Suppose, toward a contradiction, that for some computable exhaustive hierarchy $\bar{\cX}$ there is a computable function $B$ satisfying the bound in the statement. Given two descriptions $p,q\in\Pi$, compute
\begin{equation}
  s:=\max\{\kappa(p),\kappa(q)\}.
\end{equation}
Compute $B(s)$ and enumerate $\cX_{B(s)}$. Since evaluation is decidable from descriptions, we can check whether $h_p$ and $h_q$ disagree on some point of $\cX_{B(s)}$. If they do, then they are not semantically equivalent. If they do not, then $h_p$ and $h_q$ agree on $\cX_{B(s)}$. But $h_p,h_q\in\cH_{\leq s}^{\Pi,\kappa}$, and $B(s)$ is at least the sublevel domain complexity of this slice. Hence any two distinct hypotheses in $\cH_{\leq s}^{\Pi,\kappa}$ would have been separated inside $\cX_{B(s)}$. Therefore $h_p=h_q$ on all of $\Omega$, and the procedure decides semantic equivalence from descriptions.
\end{proof}

\begin{proof}[\textbf{Proof of \Cref{prop:noncomputable-hierarchy-sublevel-bound}}]
Fix an enumeration $z_1,z_2,\ldots$ of $\Omega$. We construct the hierarchy recursively. Set $\cX_0=\emptyset$. Given $\cX_{m-1}$, for each $i<m$ choose a point $w_{i,m}\in\Omega$ such that $h_i(w_{i,m})\neq h_m(w_{i,m})$. Let $u_m$ be the first point in the enumeration of $\Omega$ that is not contained in
\begin{equation}
  \cX_{m-1}\cup\{w_{i,m}:i<m\}.
\end{equation}
Define
\begin{equation}
  \cX_m:=\cX_{m-1}\cup\{w_{i,m}:i<m\}\cup\{u_m\}.
\end{equation}
Each $\cX_m$ is finite and $\cX_{m-1}\subsetneq\cX_m$. The added points $u_m$ ensure that every point of $\Omega$ eventually appears, so $\bar{\cX}$ is exhaustive. Finally, for any pair $i<j\leq m$, the point $w_{i,j}$ was added by level $j$, and hence belongs to $\cX_m$; it separates $h_i$ from $h_j$. Thus $\cX_m$ distinguishes every pair in $\{h_1,\ldots,h_m\}$, so $N^{\bar{\cX}}(\{h_1,\ldots,h_m\})\leq m$.
\end{proof}

%% file: appendix/app_d_no_free_lunch_proofs.tex
\section{Proofs for the No-Free-Lunch Results}
\label{app:no-free-lunch-proofs}

\subsection{Class-Level No Free Lunch}
\label{app:class-level-nfl-proof}

\begin{proof}[\textbf{Proof of \Cref{thm:domain-generalization-nfl}}]
Choose three distinct hypotheses $h_1,h_2,h_3\in\cH$. Since labels are binary, the three pairwise agreement sets
\begin{equation}
  \Eq(h_1,h_2),\qquad \Eq(h_1,h_3),\qquad \Eq(h_2,h_3)
\end{equation}
cover $\Omega$: at every point, at least two of the three binary labels agree. Since $\Omega$ is infinite, at least one of these three agreement sets is infinite. Let $g_0,g_1\in\cH$ be a distinct pair with $\Eq(g_0,g_1)$ infinite.

Fix $M\in\mathbb{N}_{\geq 1}$. Choose distinct points
\begin{equation}
  z_1,\ldots,z_M\in \Eq(g_0,g_1),
\end{equation}
and extend them to an enumeration $z_1,z_2,\ldots$ of $\Omega$. Define the point-by-point hierarchy
\begin{equation}
  \cX_n:=\{z_1,\ldots,z_n\}.
\end{equation}
Then $g_0$ and $g_1$ agree on $\cX_M$.

Now let $A$ be any learner. At level $M$, the labeled sample generated by target $g_0$ is the same as the labeled sample generated by target $g_1$. Therefore $A$ has the same output on both samples. It cannot be equal to both distinct functions $g_0$ and $g_1$. Hence at least one of the two targets, call it $h$, is not identified by level $M$, so
\begin{equation}
  N_A^{\bar{\cX}}(h)>M.
\end{equation}
The hierarchy was constructed before choosing $A$, so this proves the first claim.

When $\cH$ is finite, the equivalent class-domain-complexity statement follows from \Cref{prop:class-domain-complexity-optimal}. Indeed,
\begin{equation}
  N^{\bar{\cX}}(\cH)>M
\end{equation}
if and only if no learner can identify every target in $\cH$ by level $M$, which is equivalent to saying that every learner has some target $h\in\cH$ with $N_A^{\bar{\cX}}(h)>M$.
\end{proof}

\subsection{Complexity-Stratified No Free Lunch}
\label{app:complexity-stratified-nfl-proof}

\begin{proof}[\textbf{Proof of \Cref{cor:complexity-stratified-nfl}}]
Fix an enumeration
\begin{equation}
  \Omega=\{y_1,y_2,\ldots\}.
\end{equation}
We will construct a point-by-point hierarchy by constructing an enumeration $z_1,z_2,\ldots$ of $\Omega$. The construction ensures that, for infinitely many increasing complexity thresholds, a pair of hypotheses inside the corresponding sublevel class agrees on a very long initial prefix.

\emph{Step 1: a reusable extension claim.}
We first state the combinatorial step used at every stage. Let $P\subseteq\Omega$ be finite, let $S\in\mathbb{N}$, and let $R\subseteq\Omega$ be infinite. Since $\cH_{\leq S}^{c}$ is finite and $\cH$ is infinite, the set $\cH\setminus\cH_{\leq S}^{c}$ is infinite. Only finitely many binary label patterns can occur on $P$, so infinitely many hypotheses in $\cH\setminus\cH_{\leq S}^{c}$ share the same pattern on $P$. Choose three such hypotheses. On every point of $R$, at least two of their three binary labels agree; since the three pairwise agreement sets cover $R$, one pair agrees on infinitely many points of $R$. Thus there exist distinct $f,g\in\cH\setminus\cH_{\leq S}^{c}$ such that
\begin{equation}
  f|_P=g|_P
  \qquad\text{and}\qquad
  \Eq(f,g)\cap R \text{ is infinite}.
\end{equation}

\emph{Step 2: build a hierarchy with longer and longer concealed pairs.}
Choose any three hypotheses in $\cH$. Their three pairwise agreement sets cover the infinite set $\Omega$, so one pair $(f_0,g_0)$ has infinite agreement set. Let
\begin{equation}
  A_0:=\Eq(f_0,g_0),
  \qquad
  S_0:=\max\{c(f_0),c(g_0)\},
  \qquad
  P_0:=\emptyset.
\end{equation}
Inductively, suppose we have a finite prefix $P_k$ of the ordering, a pair $(f_k,g_k)$, an infinite agreement set $A_k:=\Eq(f_k,g_k)$, and a threshold $S_k:=\max\{c(f_k),c(g_k)\}$, with $P_k\subseteq A_k$. The invariant means that the current pair agrees on every point revealed so far.

Apply the combinatorial step with the finite set $P_k\cup\{y_{k+1}\}$, the threshold $S_k$, and the infinite set $A_k$. We obtain a pair $(f_{k+1},g_{k+1})$ outside $\cH_{\leq S_k}^{c}$ such that it agrees on $P_k\cup\{y_{k+1}\}$ and has infinitely many agreement points inside $A_k$. Set
\begin{equation}
  A_{k+1}:=\Eq(f_{k+1},g_{k+1}),
  \qquad
  S_{k+1}:=\max\{c(f_{k+1}),c(g_{k+1})\}.
\end{equation}
Because both $f_{k+1}$ and $g_{k+1}$ lie outside $\cH_{\leq S_k}^{c}$, we have $S_{k+1}>S_k$. Also $A_k\cap A_{k+1}$ is infinite, and $P_k\cup\{y_{k+1}\}\subseteq A_{k+1}$.

Extend the current ordering by adding unused points from $A_k\cap A_{k+1}$ until at least the first $B(S_{k+1})$ positions have been filled. Then append $y_{k+1}$ if it has not already appeared. Let $P_{k+1}$ be the resulting finite prefix. This preserves the induction invariant $P_{k+1}\subseteq A_{k+1}$: the old prefix and $y_{k+1}$ lie in $A_{k+1}$, and the added points lie in $A_k\cap A_{k+1}$. It also ensures that the first $B(S_{k+1})$ points of the ordering lie in $A_k$.

Since $y_{k+1}$ is inserted by the end of stage $k+1$ if it has not appeared earlier, every point of $\Omega$ eventually appears. The limiting ordering $z_1,z_2,\ldots$ is therefore an enumeration of $\Omega$, and it defines a point-by-point hierarchy $\bar{\cX}$ by
\begin{equation}
  \cX_n:=\{z_1,\ldots,z_n\}.
\end{equation}

\emph{Step 3: verify the lower bound on the sublevel domain complexity.}
It remains to verify the bound. The thresholds $S_k$ are strictly increasing, hence $S_k\to\infty$. Let $s\geq S_0$. Choose $k$ such that
\begin{equation}
  S_k\leq s\leq S_{k+1}.
\end{equation}
The pair $(f_k,g_k)$ lies in $\cH_{\leq s}^{c}$ because both complexities are at most $S_k\leq s$. By construction, $f_k$ and $g_k$ agree on the first $B(S_{k+1})$ points of the hierarchy. Since $B$ is nondecreasing and $s\leq S_{k+1}$, they agree on the first $B(s)$ points. Therefore $\cX_{B(s)}$ does not distinguish all pairs in $\cH_{\leq s}^{c}$, and by the definition of sublevel domain complexity,
\begin{equation}
  N_{(\cH,c)}^{\bar{\cX}}(s)>B(s).
\end{equation}
Thus the corollary holds with $s_0=S_0$.
\end{proof}

%% file: appendix/app_e_online_learning.tex
\section{Proofs for Online Learning}
\label{app:online-learning-proofs}

\begin{proof}[\textbf{Proof of \Cref{thm:Ldim-mistake-bound}}]
We first prove the upper bound when $d:=\Ldim(\cH)<\infty$, using the convention $\Ldim(\emptyset)=-1$. The learner maintains the version space $V_n\subseteq\cH$ of hypotheses consistent with all labels revealed before level $n$. Thus $V_1=\cH$, and after level $n$ is revealed the version space is restricted to hypotheses consistent with the new labels.

For a point $x$ and a label $b\in\{0,1\}$, write
\begin{equation}
  V_n(x,b):=\{h\in V_n:h(x)=b\}.
\end{equation}
Before seeing the labels on $\cX_n\setminus\cX_{n-1}$, the learner predicts at each $x$ a label $b$ maximizing $\Ldim(V_n(x,b))$, breaking ties arbitrarily. After all labels at level $n$ are revealed, it updates the version space to the subset of $V_n$ consistent with the whole increment.

We claim that every mistaken level strictly decreases the Littlestone dimension of the version space. Suppose level $n$ is mistaken. Then for some $x\in\cX_n\setminus\cX_{n-1}$, the learner predicted $1-b$ while the true label is $b=f(x)$. Since the learner chose a label of maximum Littlestone dimension,
\begin{equation}
  \Ldim(V_n(x,1-b))\geq \Ldim(V_n(x,b)).
\end{equation}
Both $V_n(x,0)$ and $V_n(x,1)$ are subclasses of $V_n$, so their Littlestone dimensions are at most $\Ldim(V_n)$. If $\Ldim(V_n(x,b))=\Ldim(V_n)$, then also $\Ldim(V_n(x,1-b))=\Ldim(V_n)$, and placing $x$ at the root above two shattered trees of depth $\Ldim(V_n)$ would shatter a tree of depth $\Ldim(V_n)+1$, a contradiction. Hence
\begin{equation}
  \Ldim(V_n(x,b))<\Ldim(V_n).
\end{equation}
The updated version space after the whole increment is contained in $V_n(x,b)$, so its Littlestone dimension is also strictly smaller than $\Ldim(V_n)$. Since the dimension starts at $d$ and never goes below zero, at most $d$ levels can be mistaken.

For the lower bound, let $d$ be any integer such that $\cH$ shatters a complete binary tree of depth $d$. We show that every deterministic learner can be forced to make $d$ mistakes; if $\Ldim(\cH)=\infty$, this holds for every finite $d$.

The adversary follows a shattered tree. At the root, it presents the root instance as the first singleton increment. If the learner predicts $0$, the adversary reveals label $1$; if the learner predicts $1$, it reveals label $0$. The adversary then moves to the child corresponding to the revealed label and repeats this procedure for $d$ levels. By shattering, the resulting root-to-leaf sequence of labels is realized by some target $f\in\cH$.

The points along the realized path are distinct. If the same point appeared twice on one root-to-leaf path, then at the later occurrence the two outgoing branches would require both labels while the earlier occurrence has already fixed one label, contradicting shattering. Thus the path points can be used as distinct singleton increments of a hierarchy, which is then completed arbitrarily to exhaust $\Omega$. By construction, the learner makes one mistake at each of the first $d$ levels. Therefore no learner can have a uniform mistake bound smaller than $d$. Taking the supremum over all shattered depths gives the lower bound $\Ldim(\cH)$.
\end{proof}

\begin{proof}[\textbf{Proof of \Cref{prop:mistake-intermediate}}]
If $\cH$ shatters a complete binary tree of depth $d$, then the $2^d$ root-to-leaf label sequences must be realized by $2^d$ distinct hypotheses: two paths that first diverge at a node assign different labels to the same instance at that node. Hence, when $\cH$ is finite,
\begin{equation}
  2^d\leq |\cH|
\end{equation}
for every shattered depth $d$, so $\Ldim(\cH)\leq \log_2|\cH|<\infty$.

Now suppose $\Ldim(\cH)=d<\infty$. By \Cref{thm:Ldim-mistake-bound}, there is a deterministic online prediction rule that makes at most $d$ mistakes on every target and hierarchy. Fix a singleton hierarchy enumerating $\Omega$, say $\cX_n=\{x_1,\ldots,x_n\}$.

For each target $f\in\cH$, let $E_f\subseteq\mathbb{N}$ be the finite set of times at which this learner makes a mistake when run against $f$ on the fixed hierarchy. Then $|E_f|\leq d$. The set $E_f$ determines $f$: given $E_f$, replay the learner from the beginning; at time $n$, compute its prediction from the labels reconstructed so far, and set the true label to be the prediction if $n\notin E_f$ and the opposite label if $n\in E_f$. This reconstructs $f(x_n)$ for every $n$. Since the hierarchy enumerates $\Omega$, it reconstructs $f$ on all of $\Omega$.

Thus the map $f\mapsto E_f$ is injective from $\cH$ into the collection of finite subsets of $\mathbb{N}$ of size at most $d$, which is countable. Therefore $\cH$ is countable.
\end{proof}

\begin{proof}[Verification of \Cref{ex:singleton-ldim-one}]
The class shatters a one-point tree: at any point $x_j$, the hypothesis $h_j$ realizes label $1$, while any $h_i$ with $i\neq j$ realizes label $0$. Hence $\Ldim(\cH_{\mathrm{sing}})\geq 1$.

It cannot shatter a depth-two tree. If the root is labeled by some point $x_j$ and the root label is $1$, then the only consistent target is $h_j$. All later labels are then fixed, so the next node cannot realize both outgoing labels. Thus no depth-two tree is shattered, and $\Ldim(\cH_{\mathrm{sing}})=1$.

Equivalently, the all-zero online predictor makes at most one mistake on this class, namely when the unique positive point of the target singleton is revealed.

Finally, consider the singleton hierarchy $\cX_n=\{x_1,\ldots,x_n\}$. Suppose \Atwo\ held along this hierarchy with some learner and common level $N$. For every $i>N$, the target $h_i$ gives the same all-zero labeled sample on $\cX_N$. At level $N$, the learner would therefore receive the same input for all targets $h_i$ with $i>N$, but would have to output each of these distinct targets. This is impossible. Hence no target-uniform identification bound exists, so \Atwo\ fails.
\end{proof}

%% file: appendix/app_f_approximation.tex
\section{Proofs for Approximation}
\label{app:approximation-proofs}

\begin{proof}[\textbf{Proof of \Cref{prop:approximate-hierarchy-nfl}}]
Choose three distinct hypotheses in $\cH$. Since the labels are binary, the three pairwise agreement sets cover $\Omega$; because $\Omega$ is infinite, some pair $f,g$ agrees on infinitely many points. Since $P$ has full support and $f\neq g$,
\begin{equation}
  \Delta:=P(\{f\neq g\})>0.
\end{equation}
Set $\varepsilon:=\Delta/3$. Given $M$, choose a finite set $S\subseteq\{f=g\}$ of size $M$, list these points first, and complete the list to a hierarchy with $\cX_M=S$. Since $f|_S=g|_S$, the learner returns the same predictor on the two training sets; call it $\hat h$. On every point where $f$ and $g$ disagree, $\hat h$ is wrong for at least one of the two targets. Hence
\begin{equation}
  \Delta
  \leq
  L_{P,f}(\hat h)+L_{P,g}(\hat h).
\end{equation}
Therefore at least one of the two targets, say $u\in\{f,g\}$, has $L_{P,u}(A(u|_{\cX_M}))\geq\Delta/2>\varepsilon$, and so $N_{A,\varepsilon}^{\bar{\cX},P}(u)>M$.
\end{proof}

%% file: appendix/app_f_out_of_support.tex
\section{Proofs for Out-of-Support Generalization}
\label{app:out-of-support-proofs}

\begin{proof}[\textbf{Proof of \Cref{thm:full-support-source-main}}]
Enumerate the finite class as $\cH=\{h_1,\ldots,h_M\}$ and use the least-index consistent learner: given a finite labeled sample, output the first hypothesis in the enumeration consistent with all observed labels, if one exists, and output $h_1$ otherwise. The learner ignores the order and multiplicity of repeated sample points.

If $M=1$, the claim is immediate. Assume $M\geq2$. For every pair of distinct hypotheses $h_j,h_k\in\cH$, define
\begin{equation}
  D_{j,k}:=\{x\in\Omega:h_j(x)\neq h_k(x)\}.
\end{equation}
Each $D_{j,k}$ is nonempty and has positive $Q$-mass because $Q$ has full support. Let
\begin{equation}
  \alpha:=\min_{j\neq k} Q(D_{j,k})>0.
\end{equation}
If the sample contains at least one point from every pairwise disagreement set, then the observed labels distinguish the target from every other hypothesis in $\cH$. The least-index consistent learner therefore outputs the target, and the $P$-error is zero.

By a union bound, the probability of missing some pairwise disagreement set is at most
\begin{equation}
  M(M-1)(1-\alpha)^n,
\end{equation}
which tends to zero as $n\to\infty$. Hence for every $\delta>0$ there is a finite $N$ such that, for all $n\geq N$, the sample hits every pairwise disagreement set with probability at least $1-\delta$. On this event, the learner outputs the target for every $f\in\cH$, and therefore has $P$-error at most $\varepsilon$ for every target. This gives the target-uniform bound.
\end{proof}

\begin{proof}[\textbf{Proof of \Cref{thm:infinite-support-source-failure}}]
Choose three distinct hypotheses in $\cH$. Their three pairwise agreement sets cover the infinite domain $\Omega$, so some pair $u,v\in\cH$ has an infinite agreement set $\Eq(u,v)$. Choose any distribution $Q$ with infinite support contained in $\Eq(u,v)$; for example, enumerate countably many distinct points in $\Eq(u,v)$ and assign them probabilities proportional to $2^{-k}$.

Since $u\neq v$, the disagreement set $\{u\neq v\}$ is nonempty. Because $P$ has full support,
\begin{equation}
  \Delta:=P(\{u\neq v\})>0.
\end{equation}
Choose $\varepsilon<\Delta/2$.

Fix a learner $A$ and a sample size $n$.
Samples from $Q$ always lie in $\Eq(u,v)$, so the labeled sample given to $A$ is identical under target $u$ and target $v$. For any fixed sample sequence, let $\hat h$ be this common learner output. At each point where $u$ and $v$ disagree, $\hat h$ is wrong for at least one of the two targets. Therefore
\begin{equation}
  L_{P,u}(\hat h)+L_{P,v}(\hat h)\geq \Delta.
\end{equation}
Thus, for every sample sequence, at least one of the two targets has error larger than $\varepsilon$. Averaging over $Q^n$, at least one of the two targets has failure probability at least $1/2$. Since $A$ and $n$ were arbitrary, this holds for every learner and every sample size; in particular, it holds for every $N$ and every $n\geq N$.
\end{proof}

%% file: appendix/app_h_positive_only.tex
\section{Proofs for Positive-Only Presentations}
\label{app:positive-only-proofs}

\begin{proof}[\textbf{Proof of \Cref{prop:positive-only-uniform-identification}}]
First suppose
\begin{equation}
  d:=\sup_{\substack{f,g\in\cH\\ f\neq g}}|\mathrm{Pos}(f)\cap \mathrm{Pos}(g)|<\infty.
\end{equation}
Define a learner as follows. Given a finite positive set $S$, if there is a unique $h\in\cH$ such that $S\subseteq \mathrm{Pos}(h)$, output that $h$; otherwise output an arbitrary binary predictor.

Fix a target $f\in\cH$ and a positive presentation $\bar S\in\mathrm{Pres}^{+}(f)$. Since the presentation is increasing and every step adds a positive point, $|S_t|\geq t$. Hence $|S_t|\geq d+1$ for every $t\geq d+1$. No $g\neq f$ can contain all points in $S_t$, because then $|\mathrm{Pos}(f)\cap \mathrm{Pos}(g)|\geq |S_t|\geq d+1$, contradicting the definition of $d$. Thus $f$ is the unique hypothesis whose positive region contains $S_t$, and the learner outputs $f$ for all $t\geq d+1$.

Conversely, suppose
\begin{equation}
  \sup_{\substack{f,g\in\cH\\ f\neq g}}|\mathrm{Pos}(f)\cap \mathrm{Pos}(g)|=\infty.
\end{equation}
Let $A$ be any learner and let $N<\infty$. Choose $f\neq g$ with $|\mathrm{Pos}(f)\cap \mathrm{Pos}(g)|\geq N$, and choose distinct points $x_1,\ldots,x_N\in \mathrm{Pos}(f)\cap \mathrm{Pos}(g)$. Since both positive regions are infinite, we can extend the common initial sequence
\begin{equation}
  S_t=\{x_1,\ldots,x_t\},
  \qquad 1\leq t\leq N,
\end{equation}
to positive presentations of both $f$ and $g$. At time $N$, the learner receives the same input under the two targets. It therefore gives the same output in both worlds, and cannot be correct for both $f$ and $g$. No learner can satisfy \Cref{eq:positive-only-uniform-id} with this $N$, and since $N$ was arbitrary, no uniform finite bound exists.
\end{proof}

\paragraph{An infinite example.}
The finite-overlap condition can hold for infinite classes. Let $\Omega=\mathbb{N}\times\mathbb{N}$ and, for $a,b\in\mathbb{N}$, define
\begin{equation}
  G_{a,b}:=\{(t,at+b):t\in\mathbb{N}\},
\end{equation}
and let $f_{a,b}$ be the indicator of $G_{a,b}$. Each $\mathrm{Pos}(f_{a,b})=G_{a,b}$ is infinite, and the class $\{f_{a,b}:a,b\in\mathbb{N}\}$ is countably infinite. Two distinct such sets intersect in at most one point, since two distinct affine functions over $\mathbb{N}$ agree at at most one input. Hence the pairwise positive overlap is at most one, and two positive observations uniformly identify the target.

%% file: appendix/app_g_limitations.tex
\section{Limitations and Scope}
\label{app:limitations}

Our results are information-theoretic: they characterize when identification is possible, not how to compute the identifying level for a given $(\cH, c, \bar{\cX})$. Three structural restrictions are worth noting. First, we work in the realizable, deterministic-labeling setting; agnostic and noisy presentations are deferred to future work. Second, we focus on exact identification; approximate analogues with $\varepsilon$-error are discussed in \Cref{sec:approximation}. Third, the framework is set-indexed rather than sequence-indexed, hiding within-level order; online-learning variants are discussed in \Cref{sec:online-learning}. The work is theoretical and does not raise direct societal concerns.